\newcommand{\keywords}[1]{\par\addvspace\baselineskip
\noindent\keywordname\enspace\ignorespaces#1}
\newcommand{\R}{\mathbb{R}}
\newcommand{\PRC}{\hat{Q}}
\newcommand{\Radc}{\hat{{R}}}
\newcommand{\Radctd}{\hat{R}^{td}}
\newcommand{\X}{\mathcal{X}}
\newcommand{\E}{\mathop{\mathbb{E}}}
\newcommand{\C}{\mathcal{C}}
\newcommand{\Z}{\mathcal{Z}}
\renewcommand{\S}{\mathcal{S}}
\newcommand{\fa}{\bar{f}}
\newcommand{\err}{\mathrm{err}}
\newcommand{\vepsilon}{\vec \epsilon}
\newcommand{\veta}{\vec \eta}
\newcommand{\Y}{\mathcal{Y}}
\newcommand{\card}{\mathrm{card}}
\newcommand{\Hyp}{\mathcal{H}}
\newcommand{\Prob}{\mathbb{P}}
\begin{document}

\mainmatter  % start of an individual contribution

\title{Permutational Rademacher Complexity}
\subtitle{A New Complexity Measure for Transductive Learning}

\titlerunning{Permutational Rademacher Complexity}

\author{
Ilya Tolstikhin\inst{1}
\and Nikita Zhivotovskiy\inst{2}\inst{3}
\and Gilles Blanchard\inst{4}
}

%
%\authorrunning{Blanchard, G., Tolstikhin, I., Zhivotovskiy, N.}
% (feature abused for this document to repeat the title also on left hand pages)

% the affiliations are given next; don't give your e-mail address
% unless you accept that it will be published
\institute{
Max-Planck-Institute for Intelligent Systems, T\"ubingen, Germany
\mailsa
\and
Moscow Institute of Physics and Technology, Moscow, Russia
\and
Institute for Information Transmission Problems, Moscow, Russia
\mailsb
\and
Department of Mathematics, Universit\"at Potsdam, Potsdam, Germany
\mailsc
}

%\toctitle{Permutational Rademacher Complexity}
%\tocauthor{Blanchard, G., Tolstikhin, I., Zhivotovskiy, N.}
\maketitle

\begin{abstract}
Transductive learning considers situations when a learner observes $m$ labelled training points and $u$ unlabelled test points with the final goal of giving correct answers for the test points.
This paper introduces a new complexity measure for transductive learning called \emph{Permutational Rademacher Complexity} (PRC) and studies its properties.
A novel symmetrization inequality is proved, which shows that PRC provides a tighter control over expected suprema of empirical processes compared to what happens in the standard i.i.d. setting.
A number of comparison results are also provided, which show the relation between PRC and other popular complexity measures used in statistical learning theory, including Rademacher complexity and Transductive Rademacher Complexity (TRC).
We argue that PRC is a more suitable complexity measure for transductive learning.
Finally, these results are combined with a standard concentration argument to provide novel data-dependent risk bounds for transductive learning.
\keywords{Transductive Learning, Rademacher Complexity, Statistical Learning Theory, Empirical Processes, Concentration Inequalities}
\end{abstract}

\section{Introduction} 
Rademacher complexities (\cite{KP99}, \cite{BM02}) play an important role in the widely used concen\-tration-based approach to statistical learning theory \cite{BBL05}, which is closely related to the analysis of empirical processes \cite{Vaar00}.
They measure a complexity of function classes and provide data-dependent risk bounds in the standard i.i.d. framework of inductive learning, thanks to symmetrization and concentration inequalities.
Recently, a number of attempts were made to apply this machinery also to the \emph{transductive learning} setting \cite{Vap98}.
In particular, the authors of \cite{EP09} introduced a notion of \emph{transductive Rademacher complexity} and provided an extensive study of its properties, as well as general transductive risk bounds based on this new complexity measure.

In the transductive learning, a learner observes $m$ labelled training points and $u$ unlabelled test points. 
The goal is to give correct answers on the test points. 
Transductive learning naturally appears in many modern large-scale applications, including text mining, recommender systems, and computer vision, where often the objects to be classified are available beforehand.
There are two different settings of transductive learning, defined by V.\,Vapnik in his book \cite[Chap. 8]{Vap98}.
The first one assumes that all the objects from the training and test sets are generated i.i.d. from an unknown distribution $P$.
The second one is \emph{distribution free}, and it assumes that the training and test sets are realized by a uniform and random partition of a fixed and finite general population of cardinality $N:= m+u$ into two disjoint subsets of cardinalities $m$ and~$u$;
moreover, no assumptions are made regarding the underlying source of this general population.
The second setting has gained much attention\footnote{
For the extensive overview of transductive risk bounds we refer the reader to \cite{P08}.
} (\cite{Vap98}, \cite{DEM04}, \cite{CM06}, \cite{EP09}, \cite{CMP+09}, and \cite{TBK14}), probably due to the fact that any upper risk bound for this setting directly implies a risk bound also for the first setting \cite[Theorem 8.1]{Vap98}.
In essence, the second setting studies uniform deviations of risks computed on two disjoint finite samples.
Following Vapnik's discussion in \cite[p. 458]{CSZ06}, we would also like to emphasize that the second setting of transductive learning naturally appears as a middle step in proofs of the standard inductive risk bounds, as a result of symmetrization or the so-called \emph{double-sample} trick.
This way better transductive risk bounds also translate into better inductive ones.

An important difference between the two settings discussed above lies in the fact that the $m$ elements of the training set in the second setting are interdependent, because they are sampled uniformly \emph{without replacement} from the general population.
As a result, the standard techniques developed for inductive learning, including concentration and Rademacher complexities mentioned in the beginning, can not be applied in this setting, since they are heavily based on the i.i.d. assumption.
Therefore, it is important to study empirical processes in the setting of sampling without replacement.

{\bf Previous work.}
A large step in this direction was made in \cite{EP09}, 
where the authors presented a version of McDiarmid's bounded difference inequality~\cite{BLM13} for sampling without replacement together with the Transductive Rademacher Complexity (TRC).
As a main application the authors derived an upper bound on the binary test error of a transductive learning algorithm in terms of TRC.
However, the analysis of \cite{EP09} has a number of shortcomings.
Most importantly, TRC depends on the unknown labels of the test set.
In order to obtain computable risk bounds, the authors resorted to the contraction inequality \cite{LT91}, which is known to be a loose step \cite{M14}, since it destroys any dependence on the labels.
%combination of permutations with i.i.d. Rademacher signs lead to very complicated proofs
%the definition of PRC was largely defined by the lack of contraction, which is an important part of their analysis

Another line of work was presented in \cite{TBK14}, where variants of Talagrand's concentration inequality were derived for the setting of sampling without replacement. 
These inequalities were then applied to achieve transductive risk bounds with fast rates of convergence $o(m^{-1/2})$, following a \emph{localized} approach~\cite{BBM05}.
In contrast, in this work we consider only the worst-case analysis based on the \emph{global} complexity measures.
An analysis under additional assumptions on the problem at hand, including Mammen-Tsybakov type low noise conditions \cite{BBL05}, is an interesting open question and left for future work.

{\bf Summary of our results.}
This paper continues the analysis of empirical processes indexed by arbitrary classes of uniformly bounded functions in the setting of sampling without replacement, initiated by \cite{EP09}.
We introduce a new complexity measure called \emph{permutational Rademacher complexity} (PRC) and argue that it captures the nature of this setting very well.
%Roughly speaking, PRC is defined in a similar way to the conditional Rademacher complexity (conditioned on the training sample), but the averaging is made over the random permutations of a fixed sequence of signs instead of independent Rademacher signs.
Due to space limitations we present the analysis of PRC only for the special case when the training and test sets have the same size $m=u$,
which is nonetheless sufficiently illustrative\footnote{
All the results presented in this paper are also available for the general $m\neq u$ case, but we defer them to a future extended version of this paper.
}.

We prove a novel symmetrization inequality (Theorem \ref{thm:tsym}), which shows that the expected PRC and the expected suprema of empirical processes when sampling without replacement are equivalent up to multiplicative constants.
Quite remarkably, the new upper and lower bounds (the latter is often called \emph{desymmetrization inequality}) both hold without any additive terms when $m=u$, in contrast to the standard i.i.d. setting, where an additive term of order $O(m^{-1/2})$ is unavoidable in the lower bound. 
For TRC even the upper symmetrization inequality \cite[Lemma 4]{EP09} includes an additive term of the order $O(m^{-1/2})$ and no desymmetrization inequality is known.
This suggests that PRC may be a more suitable complexity measure for transductive learning.
We would also like to note that the proof of our new symmetrization inequality is surprisingly simple, compared to the one presented in \cite{EP09}.

Next we compare PRC with other popular complexity measures used in statistical learning theory.
In particular, we provide achievable upper and lower bounds, relating PRC to the conditional Rademacher complexity (Theorem~\ref{thm:dagstuhl-relation}).
These bounds show that the PRC is upper and lower bounded by the conditional Radema\-cher complexity up to additive terms of orders $o(m^{-1/2})$ and $O(m^{-1/2})$ respectively, which are achievable (Lemma \ref{lemma:achievable}).
In addition to this, Theorem~\ref{thm:dagstuhl-relation} also significantly improves bounds on the complexity measure called \emph{maximum discrepancy} presented in \cite[Lemma 3]{BM02}.
We also provide a comparison between expected PRC and TRC (Corollary \ref{corollary:comparison-prc-trc}), which shows that their values are close up to small multiplicative constants and additive terms of order $O(m^{-1/2})$.

Finally, we apply these results to obtain a new computable data-dependent risk bound for transductive learning based on the PRC (Theorem \ref{thm:risk-bound}), which holds for any bounded loss functions. 
We conclude by discussing the advantages of the new risk bound over the previously best known one of \cite{EP09}.

\section{Notations}
We will use calligraphic symbols to denote sets, with subscripts indicating their cardinalities: $\card(\Z_m)=m$.   
For any function $f$ we will denote its average value computed on a finite set $S$ by $\fa(S)$.
In what follows we will consider an arbitrary space $\mathcal{Z}$ (for instance, a space of input-output pairs) and class $F$ of functions (for instance, loss functions) mapping $\mathcal{Z}$ to $\R$.
Most of the proofs are deferred to the last section for improved readability.

Arguably, one of the most popular complexity measures used in statistical learning theory is the Rademacher complexity (\cite{LT91}, \cite{KP99}, \cite{BM02}):
\begin{definition}[Conditional Rademacher complexity]
%Consider some space $\mathcal{Z}$ and let $F$ be a class of functions mapping from $\mathcal{Z}$ to $\R$.
Fix any subset $\Z_m = \{Z_1,\dots,Z_m\} \subseteq \mathcal{Z}$.
The following random quantity is commonly known as a \emph{conditional Rademacher complexity}:
\[
\Radc_m(F, \Z_m)
=
\E_{\vec \epsilon}
\left[
\frac{2}{m}
\sup_{f\in F}
\sum_{i=1}^m \epsilon_i f(Z_i)
\right],\enspace
\]
where $\vec \epsilon = \{\epsilon_i\}_{i=1}^m$ are i.i.d. Rademacher signs, taking values $\pm 1$ with probabilities $1/2$.
When the set $\Z_m$ is clear from the context we will simply write $\Radc_m(F)$.
\end{definition}
As discussed in the introduction, Rademacher complexities play an important role in the analysis of empirical processes and statistical learning theory.
However, this measure of complexity was devised mainly for the i.i.d. setting, which is different from our setting of sampling without replacement.
The following complexity measure was introduced in \cite{EP09} to overcome this issue:
\begin{definition}[Transductive Rademacher complexity]
\label{def:trc}
%Consider some space $\mathcal{Z}$ and let $F$ be a class of functions mapping from $\mathcal{Z}$ to $\R$.
Fix any set $\Z_{N} = \{Z_1,\dots,Z_{N}\} \subseteq \mathcal{Z}$, positive integers $m,u$ such that $N=m+u$, and $p \in \left[0, \frac{1}{2}\right]$.
The following quantity is called \emph{Transductive Rademacher complexity} (TRC):
\[
\Radctd_{m+u}(F, \Z_{N}, p) = 
\left(\frac{1}{m} + \frac{1}{u}\right)
\E_{\vec \sigma}
\left[
\sup_{f\in F}
\sum_{i=1}^{N} \sigma_i f(Z_i)
\right],\enspace
\]
where $\vec \sigma = \{\sigma_1\}_{i=1}^{m+u}$ are i.i.d. random variables taking values $\pm1$ with probabilities $p$ and $0$ with probability $1 - 2p$.
%When the set $\Z_m$ is clear from the context we will simply write $\Radctd_{m+u}(F, p)$. 
%Also for $p = \frac{mu}{(m + u)^{2}}$ we use a simple notation $\Radctd_{m+u}(F)$.
\end{definition}
We summarize the importance of these two complexity measures in the analysis of empirical processes when sampling without replacement in the following result:
\begin{theorem}
\label{thm:overview}
Fix an \hbox{$N$-element} subset $\Z_N\subseteq \Z$ and let $m<N$ elements of $\Z_m$ be sampled uniformly without replacement from $\Z_N$.
Also let $m$ elements of $\X_m$ be sampled uniformly with replacement from $\Z_N$.
Denote $\Z_u := \Z_N \setminus \Z_m$ with $u:=\card(\Z_u)= N-m$.
The following upper bound in terms of the i.i.d. Rademacher complexity was provided in \cite{TBK14}:
\begin{equation}
\label{eq:tsym-tbk14}
\E_{\Z_m}\sup_{f\in F} \left( \fa(\Z_u) - \fa(\Z_m) \right)
\leq
\frac{N}{u}\cdot
\E_{\X_m}\left[ \Radc_m(F,\X_m)\right].\enspace
\end{equation}
The following bound in terms of TRC was provided in \cite{EP09}.
Assume that functions in $F$ are uniformly bounded by $B$. 
Then for $p_0:=\frac{mu}{N^2}$ and $c_0 < 5.05$:
\begin{equation}
\label{eq:tsym-ep09}
\E_{\Z_m}\sup_{f\in F} \left( \fa(\Z_u) - \fa(\Z_m) \right)
\leq
\Radctd_{m+u}(F, \Z_{N}, p_0)
+
c_0 B \frac{N\sqrt{\min(m,u)}}{mu}.\enspace
\end{equation}
\end{theorem}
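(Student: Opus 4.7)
The result collects two upper bounds that rely on different symmetrization techniques, which I would treat separately.

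\textbf{First bound (equation \ref{eq:tsym-tbk14}).} I would start with the algebraic identity
\[
\fa(\Z_u) - \fa(\Z_m) = \frac{N}{u}\bigl(\fa(\Z_N) - \fa(\Z_m)\bigr),
\]
which follows from $m\,\fa(\Z_m) + u\,\fa(\Z_u) = N\,\fa(\Z_N)$ by rearrangement, and then pull the deterministic factor $N/u$ outside the supremum and expectation. It remains to bound $\E_{\Z_m}\sup_{f\in F}(\fa(\Z_N)-\fa(\Z_m))$ with $\Z_m$ drawn without replacement from the fixed population $\Z_N$. For this I would invoke Hoeffding's convex-domination principle: the expectation of any convex symmetric function of a without-replacement sample from a finite population is no larger than the same expectation under with-replacement sampling. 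Since $(x_1,\dots,x_m)\mapsto \sup_{f\in F}\bigl(\fa(\Z_N) - \tfrac{1}{m}\sum_{i=1}^m f(x_i)\bigr)$ is a supremum of affine functions and hence convex, this step replaces $\Z_m$ by the i.i.d. sample $\X_m$ drawn uniformly from $\Z_N$. Since $\fa(\Z_N)$ equals the expectation of $f(X)$ under this uniform distribution, standard symmetrization then yields $\E_{\X_m}\sup_{f\in F}(\fa(\Z_N) - \fa(\X_m)) \leq \E_{\X_m}[\Radc_m(F,\X_m)]$, the usual factor of $2$ being absorbed into the definition of $\Radc_m$. Assembling these three steps delivers (\ref{eq:tsym-tbk14}).

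\textbf{Second bound (equation \ref{eq:tsym-ep09}).} Here I would follow the transductive symmetrization of \cite{EP09}. The classical double-sample trick is adapted to the partition setting by drawing two independent uniform partitions of $\Z_N$ into training and test subsets of sizes $m$ and $u$, and coupling them via a uniformly random permutation of $\{1,\dots,N\}$. For each index $i$ one records whether $i$ moves from training to test, from test to training, or remains in place when comparing the two partitions; this produces a $\{-1,0,+1\}$-valued random variable whose marginal distribution matches that of a TRC sign $\sigma_i$ with $p_0 = mu/N^2$, with the zeros corresponding to positions on which the two partitions agree. The term $\Radctd_{m+u}(F,\Z_N,p_0)$ arises from this symmetrization once the joint law of the triples is replaced by the i.i.d. product law, and the additive term $c_0 B\sqrt{\min(m,u)}\,N/(mu)$ accounts for the deviation between the exact permutation-induced joint law and the product law. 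This last gap is controlled via a hypergeometric concentration estimate on the number of positions taking each of the three values.

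The main obstacle is in the second part: one must argue that the joint distribution of the $\sigma_i$'s induced by the uniform partition is close enough to a product distribution that only the displayed $\sqrt{\min(m,u)}$-sized remainder is paid, rather than a constant gap. The first bound, by contrast, is essentially immediate once Hoeffding's convex-domination principle is invoked, and its only subtlety is the prefactor $N/u$ introduced by the identity above.
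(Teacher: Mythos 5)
This theorem is a compilation of results imported from other papers, and the paper offers no proof of it beyond the remark that \eqref{eq:tsym-tbk14} ``can be immediately derived using \cite[Corollary 8]{TBK14} and i.i.d.\ symmetrization of \cite[Theorem 2.1]{K11sf}'', while \eqref{eq:tsym-ep09} is quoted verbatim from \cite{EP09}. Your argument for the first bound is correct and is exactly the route the paper indicates: the identity $\fa(\Z_u)-\fa(\Z_m)=\tfrac{N}{u}\bigl(\fa(\Z_N)-\fa(\Z_m)\bigr)$ does follow from $m\,\fa(\Z_m)+u\,\fa(\Z_u)=N\,\fa(\Z_N)$; the map $(x_1,\dots,x_m)\mapsto\sup_{f}\bigl(\fa(\Z_N)-\tfrac1m\sum_i f(x_i)\bigr)$ is convex as a supremum of affine functions, so Hoeffding's convex-domination principle (which is precisely the content of \cite[Corollary 8]{TBK14}) replaces the without-replacement sample by $\X_m$; and since $\fa(\Z_N)=\E f(X)$ for $X$ uniform on $\Z_N$, one-sided i.i.d.\ symmetrization yields $\E_{\X_m}[\Radc_m(F,\X_m)]$ with the factor $2/m$ already absorbed into the definition. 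For the second bound you, like the paper, ultimately defer to \cite{EP09}. Your sketch captures the right mechanism (couple the partition-induced $\{-1,0,+1\}$ signs, which have a deterministic number of nonzero entries, with i.i.d.\ signs of parameter $p_0=mu/N^2$, and pay for the fluctuation in the count of nonzero entries, of order $\sqrt{\min(m,u)}$, scaled by $B(\tfrac1m+\tfrac1u)$), but the sentence ``this last gap is controlled via a hypergeometric concentration estimate'' is exactly where all of the work and the constant $c_0<5.05$ reside, and you do not carry it out. Since the statement explicitly attributes \eqref{eq:tsym-ep09} to \cite{EP09}, citing it is acceptable here; just be clear that your write-up independently establishes only \eqref{eq:tsym-tbk14}.
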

While \eqref{eq:tsym-tbk14} did not explicitly appear in \cite{TBK14}, it can be immediately derived using \cite[Corollary 8]{TBK14} and i.i.d. symmetrization of \cite[Theorem 2.1]{K11sf}.

Finally, we introduce our new complexity measure:
\begin{definition}[Permutational Rademacher complexity]
\label{def:PRC}
%Consider some space $\mathcal{Z}$ and let $F$ be a class of functions mapping from $\mathcal{Z}$ to $\R$.
Let $\Z_m \subseteq \mathcal{Z}$ be any fixed set of cardinality $m$.
For any $n\in \{1,\dots,m-1\}$ the following quantity will be called a \emph{permutational Rademacher complexity (PRC)}:
\[
\PRC_{m,n}(F,\Z_m)
=
\E_{\Z_n}
\sup_{f\in F}
\left(
\fa(\Z_k)
-
\fa(\Z_n)
\right),\enspace
\]
where $\Z_n$ is a random subset of $\Z_m$ containing $n$ elements sampled uniformly without replacement and $\Z_k := \Z_m\setminus \Z_n$.
When the set $\Z_m$ is clear from the context we will simply write $\PRC_{m,n}(F)$.
\end{definition}

The name PRC is explained by the fact that if $m$ is even then the definitions of $\PRC_{m,m/2}(F)$ and $\Radc_{m}(F)$ are very similar. 
Indeed, the  only difference is that the expectation in the PRC is over the randomly permuted sequence containing \emph{equal number} of $``-1"$ and $``+1"$, whereas in Rademacher complexity the average is w.r.t. all the possible sequences of signs.
The term ``permutation complexity" has already appeared in \cite{M10}, where it was used to denote a novel complexity measure for a model selection.
However, this measure was specific to the i.i.d. setting and \emph{binary} loss. 
Moreover, the bounds presented in \cite{M10} were of the same order as the risk bounds based on the Rademacher complexity with worse constants in the slack term.

\section{Symmetrization and Comparison Results}
\label{section:results}
We start with showing a version of the i.i.d. symmetrization inequality (references can be found in \cite{LT91}, \cite{K11sf}) for the setting of sampling without replacement.
It shows that the expected supremum of empirical processes in this setting is up to multiplicative constants equivalent to the expected PRC.
\begin{theorem}
\label{thm:tsym}
%Consider some space $\mathcal{Z}$ and let $F$ be a class of functions mapping from $\mathcal{Z}$ to $\R$.
Fix an \hbox{$N$-element} subset $\Z_N\subseteq \Z$ and let $m<N$ elements of $\Z_m$ be sampled uniformly without replacement from $\Z_N$.
Denote $\Z_u := \Z_N \setminus \Z_m$ with $u:=\card(\Z_u)= N-m$.
If $m=u$ and $m$ is even then
 for any $n\in\{1,\dots,m-1\}$:
\[
\frac{1}{2}
\E_{\Z_m}\left[\PRC_{m,m/2}(F,\Z_m)\right]
\leq
\E_{\Z_m}\sup_{f\in F} \left( \fa(\Z_u) - \fa(\Z_m) \right)
\leq
\E_{\Z_m}\left[\PRC_{m,n}(F,\Z_m)\right].\enspace
\]
The inequalities also hold if we include absolute values inside the suprema.
\end{theorem}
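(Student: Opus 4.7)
The plan is to prove the upper and lower bounds separately, both via a Jensen-style ghost-sample argument, but with a different ``ghost" in each case.

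For the upper bound, I would exploit the simple identity: for any finite set $X$ of size $s$ and any $r \leq s$, $\fa(X) = \E[\fa(\tilde{X})]$ where $\tilde{X}$ is a uniformly random $r$-subset of $X$. Conditionally on $\Z_m$, let $\tilde{\Z}_n$ be a uniform random $n$-subset of $\Z_m$ and, independently, $\tilde{\Z}_k$ be a uniform random $k$-subset of $\Z_u$ (where $k := m-n$; this is allowed since $k \leq m$). Then $\fa(\Z_u) - \fa(\Z_m) = \E[\fa(\tilde{\Z}_k) - \fa(\tilde{\Z}_n) \mid \Z_m]$, so by Jensen's inequality
\[
\sup_{f \in F}\bigl(\fa(\Z_u) - \fa(\Z_m)\bigr) \leq \E\Bigl[\sup_{f\in F}\bigl(\fa(\tilde{\Z}_k) - \fa(\tilde{\Z}_n)\bigr) \,\Big|\, \Z_m\Bigr].
\]
Taking outer expectation in $\Z_m$, the joint law of $(\tilde{\Z}_n,\tilde{\Z}_k)$ is uniform over ordered pairs of disjoint $(n,k)$-subsets of $\Z_N$ — exactly the same law as $(\Z_n,\Z_m\setminus\Z_n)$ arising in the definition of $\PRC_{m,n}(F,\Z_m)$ after integrating over the outer $\Z_m$. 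The upper bound follows.

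For the lower bound (where $m=u$ and $m$ is even), I would first rewrite $\fa(\Z_u)-\fa(\Z_m) = \frac{1}{m}\sum_{i=1}^{N}\tau_i f(z_i)$, where $\tau \in \{\pm 1\}^N$ is the uniform balanced sign vector with exactly $m$ pluses and $m$ minuses. The key observation is the following distributional identity: $\tau$ has the same law as $\bar{\sigma}$ defined by first drawing a uniformly random $m$-subset $S\subset[N]$, then, independently, drawing balanced $\pm 1$ vectors $\sigma$ on $S$ and $\sigma'$ on $S^c$, and setting $\bar{\sigma}|_S=\sigma,\ \bar{\sigma}|_{S^c}=\sigma'$. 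This is verified by a direct counting argument: every balanced $\tau^*$ arises from exactly $\binom{m}{m/2}^2$ triples $(S,\sigma,\sigma')$, so the marginal of $\bar{\sigma}$ is uniform on balanced vectors.

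Given the identity, I would write
\[
m\E\sup_{f\in F}\bigl(\fa(\Z_u)-\fa(\Z_m)\bigr) = \E_{S,\sigma}\,\E_{\sigma'}\sup_{f\in F}\Bigl[\,\textstyle\sum_{i\in S}\sigma_i f(z_i) + \sum_{i\in S^c}\sigma'_i f(z_i)\Bigr],
\]
apply Jensen's inequality to the inner expectation (pulling $\sup_f$ outside of $\E_{\sigma'}$), and use $\E\sigma'_j=0$ to cancel the second sum. The right-hand side becomes $\E_{S,\sigma}\sup_f \sum_{i\in S}\sigma_i f(z_i)$. Since $\sigma$ is balanced on $S$, the inner sum equals $\tfrac{m}{2}(\fa(\Z_k)-\fa(\Z_n))$ where $(\Z_n,\Z_k)$ are the negative/positive halves of $S$; averaging over $\sigma$ yields $\tfrac{m}{2}\PRC_{m,m/2}(F,S)$, and averaging over $S$ (which is distributed as $\Z_m$) gives $\tfrac{m}{2}\E_{\Z_m}[\PRC_{m,m/2}(F,\Z_m)]$. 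Dividing by $m$ finishes the lower bound. The absolute-value versions of both inequalities follow by the same arguments, since $|\cdot|$ is convex so Jensen applies in exactly the same places.

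The main obstacle is the lower bound, and specifically spotting the ``nested balanced signs" decomposition $\tau\stackrel{d}{=}\bar{\sigma}$. Once one sees that a uniform balanced vector on $[N]$ can be realized as two independent balanced vectors on a random partition into equal halves, Jensen in the inner Rademacher-like ghost $\sigma'$ produces the PRC term on a single half, with a factor $1/2$ paid for the ghost — a pleasant transductive analogue of the i.i.d. desymmetrization $\Rad_m \geq \tfrac{1}{2}\E\sup_f(\fa'-\fa)$.
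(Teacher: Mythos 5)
Your proposal is correct and follows essentially the same route as the paper: the upper bound is verbatim its ghost-subsample-plus-Jensen argument, and your ``nested balanced signs'' identity for the lower bound is exactly the paper's observation that $\Z_u$ and $\Z_m$ can be regenerated by splitting a random half $S$ and its complement $S^{c}$ each into balanced halves, with Jensen applied at the same place (averaging out the half whose conditional mean vanishes). The only differences are cosmetic: the paper writes the desymmetrization in set-average rather than sign-vector notation and runs it from $\E_{\Z_m}[\PRC_{m,m/2}(F,\Z_m)]$ up to the empirical process instead of downward.
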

\begin{proof}
The proof can be found in Sect. \ref{sect:proof-symmetrization}.
\end{proof}
This inequality should be compared to the previously known complexity bounds of Theorem~\ref{thm:overview}.
First of all, in contrast to \eqref{eq:tsym-tbk14} and \eqref{eq:tsym-ep09} the new bound provides a two sided control, which shows that PRC is a ``correct" complexity measure for our setting.
It is also remarkable that the lower bound (commonly known as the \emph{desymmetrization inequality}) does not include any additive terms,
since in the standard i.i.d. setting the lower bound holds only up to an additive term of order $O(m^{-1/2})$ \cite[Sect. 2.1]{K11sf}.
Also note that this result does not assume the boundedness of functions in $F$, which is a necessary assumptions both in  \eqref{eq:tsym-ep09} and in the i.i.d. desymmetrization inequality.

Next we compare PRC with the conditional Rademacher complexity:
\begin{theorem}
\label{thm:dagstuhl-relation}
%Consider some space $\mathcal{Z}$ and let $F$ be a class of functions mapping from $\mathcal{Z}$ to $\R$.
Let $\Z_m \subseteq \mathcal{Z}$ be any fixed set of even cardinality $m$.
Then:
\begin{equation}
\label{eq:dagstuhl-1}
\PRC_{m,m/2}(F,\Z_m) 
\leq 
\left(1 + \frac{2}{\sqrt{2\pi m} - 2}\right)\,\Radc_m(F,\Z_m).\enspace
\end{equation}
Moreover, if the functions in $F$ are absolutely bounded by $B$ then
\begin{equation}
\label{eq:dagstuhl-2}
\left|
\PRC_{m,m/2}(F,\Z_m)
-
\Radc_m(F,\Z_m)
\right|
\leq
\frac{2B}{\sqrt{m}}.\enspace
\end{equation}
The results also hold if we include absolute values inside suprema in $\PRC_{m,n}, \Radc_m$.
\end{theorem}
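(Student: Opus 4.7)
The plan is to prove both inequalities via a single coupling between the i.i.d.\ Rademacher vector $\vec\epsilon$ underlying $\Radc_m$ and the uniform balanced vector underlying $\PRC_{m,m/2}$. Write $S=\sum_i\epsilon_i$, and define $\tilde\epsilon$ by: if $S>0$, pick a uniformly random $T\subseteq\{i:\epsilon_i=+1\}$ with $|T|=S/2$ and flip $\epsilon$ on $T$; symmetrically for $S<0$; and $\tilde\epsilon=\epsilon$ when $S=0$. A direct count (each balanced sequence is reachable from exactly $\binom{m/2}{|s|/2}$ vectors of sum $s$, and the uniform flip splits them equally) shows that $\tilde\epsilon\mid S=s$ is uniform on the $\binom{m}{m/2}$ balanced sequences for \emph{every} $s$, so $\tilde\epsilon$ is marginally uniform balanced and independent of $S$. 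Writing $H(\epsilon):=\sup_{f\in F}\frac{2}{m}\sum_i\epsilon_i f(Z_i)$, we have $\Radc_m(F)=\E H(\epsilon)$ and $\PRC_{m,m/2}(F)=\E H(\tilde\epsilon)$, so the task reduces to controlling $\E[H(\tilde\epsilon)-H(\epsilon)]$.

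For~\eqref{eq:dagstuhl-2}, since $\tilde\epsilon$ and $\epsilon$ differ in exactly $|S|/2$ coordinates, inserting the (near-)optimizer of either supremum into the other gives $|H(\tilde\epsilon)-H(\epsilon)|\leq\frac{2}{m}\cdot 2B\cdot|S|/2=2B|S|/m$. Taking expectation and using the classical identity $\E|S|=m\binom{m}{m/2}2^{-m}\leq\sqrt{2m/\pi}\leq\sqrt m$ (Stirling) yields $|\PRC-\Radc|\leq 2B/\sqrt m$.

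For~\eqref{eq:dagstuhl-1}, I will exploit the one-sided inequality obtained by inserting the optimizer $f^*_{\tilde\epsilon}$ of $H(\tilde\epsilon)$ into the trivial lower bound for $H(\epsilon)$:
\[
H(\tilde\epsilon) - H(\epsilon) \;\leq\; \frac{2}{m}\sum_i(\tilde\epsilon_i - \epsilon_i)\,f^*_{\tilde\epsilon}(Z_i) \;=\; -\frac{4\,\mathrm{sign}(S)}{m}\sum_{i\in T}f^*_{\tilde\epsilon}(Z_i).
\]
Conditioning on $\tilde\epsilon$ and $S=s$, the set $T$ is uniform among the $|s|/2$-subsets of the minuses of $\tilde\epsilon$ (when $s>0$) or of the pluses (when $s<0$), so the conditional expectation of $\sum_{i\in T}f^*_{\tilde\epsilon}(Z_i)$ equals $\tfrac{|s|}{m}B_{\tilde\epsilon}$ or $\tfrac{|s|}{m}A_{\tilde\epsilon}$, where $A_{\tilde\epsilon},B_{\tilde\epsilon}$ are the sums of $f^*_{\tilde\epsilon}$ over the pluses and minuses of $\tilde\epsilon$. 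Using $A_{\tilde\epsilon}-B_{\tilde\epsilon}=\tfrac{m}{2}H(\tilde\epsilon)$ and the symmetry $\Pr(S=s)=\Pr(S=-s)$, the $A+B=\sum_i f^*_{\tilde\epsilon}(Z_i)$ contribution cancels between the $s>0$ and $s<0$ branches, and summing over $s\neq 0$ leaves
\[
\PRC_{m,m/2}(F) - \Radc_m(F) \;\leq\; \frac{\E|S|}{m}\,\PRC_{m,m/2}(F) \;=\; \alpha\,\PRC_{m,m/2}(F), \qquad \alpha := \binom{m}{m/2}2^{-m}.
\]
Rearranging gives $\PRC\leq\Radc/(1-\alpha)$, and Stirling's bound $\alpha<\sqrt{2/(\pi m)}=2/\sqrt{2\pi m}$ yields $1/(1-\alpha)<\sqrt{2\pi m}/(\sqrt{2\pi m}-2)=1+2/(\sqrt{2\pi m}-2)$, which is~\eqref{eq:dagstuhl-1}. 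The absolute-value variant is immediate by applying the same argument to $F\cup(-F)$.

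The step I expect to require the most care is the sign-cancellation: the uncontrollable quantity $\E_{\tilde\epsilon}\sum_i f^*_{\tilde\epsilon}(Z_i)$ enters both branches with opposite signs because the flip direction is tied to $\mathrm{sign}(S)$ while the roles of pluses and minuses of $\tilde\epsilon$ swap between $s>0$ and $s<0$, and one has to assemble these pieces carefully to see that only the term proportional to $H(\tilde\epsilon)$ survives. Without this cancellation a naive bound on $|A_{\tilde\epsilon}|,|B_{\tilde\epsilon}|$ would reintroduce a dependence on a range of $F$, killing the unbounded version~\eqref{eq:dagstuhl-1}.
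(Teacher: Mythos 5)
Your proof is correct, and it uses the same coupling as the paper: flip a uniformly random $|S|/2$-subset of the majority sign of $\vec\epsilon$ to obtain a uniform balanced vector, which is exactly the paper's map $t(\vec\epsilon)$ onto the Hamming-nearest balanced sequences (Lemma \ref{lemma:coupling}); your additive bound \eqref{eq:dagstuhl-2} and its use of $\E\bigl|\sum_i\epsilon_i\bigr|\le\sqrt m$ also coincide with the paper's argument. Where you genuinely diverge is the multiplicative bound \eqref{eq:dagstuhl-1}. The paper's route is the pointwise conditional-expectation identity $\E[\epsilon_q\mid t(\vepsilon)]=\bigl(1-2^{-m}\binom{m}{m/2}\bigr)t_q(\vepsilon)$ (Lemma \ref{lemma:coupling-expect}), proved by an event-by-event case analysis that reduces to the binomial identity $\sum_{j=1}^n\binom{n-1}{n-j}\big/\binom{n+j}{j}=\tfrac12$, followed by Jensen's inequality. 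You instead insert the (near-)optimizer of $H(\tilde\epsilon)$ into $H(\epsilon)$ and compute the conditional law of the flipped set $T$ given $(\tilde\epsilon,S)$ by reverse conditioning — each balanced $w$ with a designated $|s|/2$-subset of its minority coordinates arises from exactly one $\epsilon$ of sum $s$ with the same probability, so $T$ is conditionally uniform — and then exploit the $s\leftrightarrow-s$ symmetry to cancel the uncontrolled $A+B$ term and retain only $\tfrac{m}{2}H(\tilde\epsilon)$. This is in effect an alternative derivation of the same identity (your computation shows $\E\bigl[\sum_i(\tilde\epsilon_i-\epsilon_i)g_i(\tilde\epsilon)\bigr]=\alpha\,\E\bigl[\sum_i\tilde\epsilon_i g_i(\tilde\epsilon)\bigr]$ with $\alpha=2^{-m}\binom{m}{m/2}$ for any $\tilde\epsilon$-measurable coefficients), and it lands on the identical constant $1/(1-\alpha)$. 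What your route buys is a shorter and more transparent replacement for the paper's Lemma \ref{lemma:coupling-expect} — no case analysis and no binomial-sum identity — at the cost of the somewhat delicate sign-cancellation bookkeeping you flag yourself; the paper's version isolates a reusable coordinatewise statement that it can then feed directly into Jensen. Only cosmetic care is needed at two points you already acknowledge: choosing $f^*_{\tilde\epsilon}$ measurably (trivial, as $\tilde\epsilon$ takes finitely many values) and rearranging $\PRC(1-\alpha)\le\Radc$, which is legitimate since $\alpha<1$ for all even $m\ge 2$.
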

\begin{proof}
Conceptually the proof is  based on the coupling between a sequence $\{\epsilon_i\}_{i=1}^{m}$ of i.i.d. Rademacher signs and a uniform random permutation $\{\eta_i\}_{i=1}^m$ of a set containing $m/2$ plus and $m/2$ minus signs.
This idea was inspired by the techniques used in \cite{GN10}.
The detailed proof can be found in Sect. \ref{sect:proof-dagstuhl}.
\end{proof}

Note that a typical order of $\Radc_m(F)$ is $O(m^{-1/2})$, thus the multiplicative upper bound \eqref{eq:dagstuhl-1} can be much tighter than the upper bound of \eqref{eq:dagstuhl-2}.
We would also like to note that Theorem \ref{thm:dagstuhl-relation} significantly improves bounds of Lemma 3 in~\cite{BM02}, which relate the so-called \emph{maximal discrepancy} measure of the class $F$ to its Rademacher complexity (for the further discussion we refer to Appendix).

%\begin{corollary}
%\label{corollary:supnorm-upper}
%Fix \hbox{$N$-element} subset $\Z_N\subseteq \Z$ and let $m<N$ elements of $\Z_m$ be sampled uniformly without replacement from $\Z_N$.
%Denote $\Z_u := \Z_N \setminus \Z_m$ with $\card(\Z_u) = u := N-m$.
%If $m=u$ then:
%\[
%\E_{\Z_m}\left[\sup_{f\in F} \left( \fa(\Z_u) - \fa(\Z_m) \right)\right]
%\leq
%\left(1 + \frac{2}{\sqrt{2\pi N} - 2}\right)\,\Radc_N(F,\Z_N).
%\]
%\end{corollary}
%\begin{proof}
%Simply notice that $\E_{\Z_m}\left[\sup_{f\in F} \left( \fa(\Z_u) - \fa(\Z_m) \right)\right] = \PRC_{N,m}(F,\Z_N)$.
%\end{proof}

Our next result shows that bounds of Theorem \ref{thm:dagstuhl-relation} are essentially tight.
\begin{lemma}
\label{lemma:achievable}
%Consider some space $\mathcal{Z}$ and let $\Z_m \subseteq \mathcal{Z}$ be any fixed set of even cardinality $m$.
Let $\Z_m\subseteq \Z$ with even $m$. There are two finite classes $F_m'$ and $F_m''$ of functions mapping $\Z$ to $\R$ and absolutely bounded by $1$, such that:
\begin{equation}
\label{eq:counterex-1}
\PRC_{m,m/2}(F'_m,\Z_m) =0,\quad
(2 m)^{-1/2}
%\frac{1}{\sqrt{2 m}}
\leq
\Radc_m(F'_m,\Z_m)
\leq
%\frac{2}{\sqrt{m}}
2 m^{-1/2};\enspace
\end{equation}
\begin{equation}
\label{eq:counterex-2}
\PRC_{m,m/2}(F''_m,\Z_m) = 1,\quad
1 - \sqrt{\frac{{2}}{{\pi m}}}
\leq
\Radc_m(F''_m,\Z_m)
\leq
1 - \frac45 \sqrt{\frac{2}{\pi m}}.\enspace
\end{equation}
\end{lemma}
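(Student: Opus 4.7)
The plan is to exhibit the two classes explicitly and reduce all four inequalities to classical bounds on the expected absolute value of a Rademacher sum $S_m = \sum_{i=1}^m \epsilon_i$.

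For $F'_m$, I would take the two constants $F'_m = \{z\mapsto +1,\; z\mapsto -1\}$. Both functions have equal averages on every balanced partition $\Z_m = \Z_n \cup \Z_k$, so $\fa(\Z_k) - \fa(\Z_n) = 0$ identically (and likewise its absolute value), giving $\PRC_{m,m/2}(F'_m, \Z_m) = 0$ under either convention. The Rademacher average reduces to
\[
\Radc_m(F'_m, \Z_m) = \frac{2}{m}\,\E_{\vec\epsilon}\bigl|S_m\bigr|,
\]
which is the same quantity whether the supremum carries an absolute value or not. The upper bound $2/\sqrt{m}$ in \eqref{eq:counterex-1} is then immediate from Cauchy--Schwarz, $\E|S_m| \leq \sqrt{\E S_m^2} = \sqrt{m}$, and the lower bound $(2m)^{-1/2}$ reduces to $\E|S_m| \geq \sqrt{m}/(2\sqrt 2)$, which is far weaker than the standard Khinchin-type inequality $\E|S_m|\geq \sqrt{m/2}$.

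For $F''_m$, I would take the shifted indicators of every balanced subset of $\Z_m$,
\[
F''_m = \bigl\{z\mapsto \mathbbm{1}[z\in A] - \tfrac12 : A\subseteq \Z_m,\; |A|=m/2\bigr\},
\]
whose elements lie in $\{-1/2, 1/2\}$ and are thus absolutely bounded by $1$. For any balanced partition the choice $A = \Z_k$ gives $\fa(\Z_k) - \fa(\Z_n) = \tfrac12 - (-\tfrac12) = 1$, which is simultaneously the maximum of both the signed difference and its absolute value, yielding $\PRC_{m,m/2}(F''_m,\Z_m)=1$ under either convention. For the Rademacher complexity I would set $T_A = \sum_{i\in A}\epsilon_i$ and observe that $T_{A^c} = S_m - T_A$, so the finite set $\{T_A : |A|=m/2\}$ is symmetric about $S_m/2$. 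Combined with the elementary identity $\sup_{|A|=m/2} T_A = m/2 - (S_m)_-$, this symmetry yields
\[
\sup_{|A|=m/2}\bigl(T_A - S_m/2\bigr) = \sup_{|A|=m/2}\bigl|T_A - S_m/2\bigr| = \tfrac{1}{2}\bigl(m - |S_m|\bigr),
\]
and therefore $\Radc_m(F''_m,\Z_m) = 1 - \E|S_m|/m$ under either convention.

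The main obstacle is matching the precise numerical constants in \eqref{eq:counterex-2}, which reduces to showing $\tfrac{4}{5}\sqrt{2m/\pi} \leq \E|S_m| \leq \sqrt{2m/\pi}$ for every even $m\geq 2$. Both inequalities follow from the exact identity $\E|S_m| = m\binom{m}{m/2}/2^m$ together with the non-asymptotic central-binomial estimates
\[
\frac{2^m}{\sqrt{2m}} \leq \binom{m}{m/2} \leq 2^m\sqrt{\frac{2}{\pi m}}.
\]
The upper estimate yields the required upper bound on $\E|S_m|$ directly. The lower estimate yields $\E|S_m|\geq \sqrt{m/2}$, and the required $\sqrt{m/2} \geq (4/5)\sqrt{2m/\pi}$ reduces to $\sqrt{\pi}\geq 8/5$, which holds since $\sqrt{\pi}\approx 1.772$.
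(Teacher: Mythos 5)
Your proposal is correct and takes essentially the same route as the paper: the paper's classes are $F'_m=\{0,1\}$ (two constants) and $F''_m$ the balanced $\{0,1\}$-indicators --- affine shifts of yours --- and it likewise reduces all four inequalities to $\E|S_m|$, Khinchin's inequality, and bounds on the central binomial coefficient. The only (immaterial) difference is that where the paper cites Stanica's two-sided estimate on $\binom{m}{m/2}$, you substitute the elementary lower bound $\binom{m}{m/2}\ge 2^m/\sqrt{2m}$ together with a direct check that the constant $4/5$ survives.
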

\begin{proof}
The proof can be found in Sect. \ref{sect:proof-lemma-counterex}.
\end{proof}
Inequalities \eqref{eq:counterex-1} simultaneously show that (a) the order $O(m^{-1/2})$ of the additive bound \eqref{eq:dagstuhl-2} can not be improved, and (b) the multiplicative upper bound \eqref{eq:dagstuhl-1} can not be reversed.
Moreover, it can be shown using \eqref{eq:counterex-2} that the factor appearing in \eqref{eq:dagstuhl-1} can not be improved to $1 + o(m^{-1/2})$.

Finally, we compare PRC to the transductive Rademacher complexity:
\begin{lemma}
\label{lem:comparison}
Fix any set $\Z_{N} = \{Z_1,\dots,Z_{N}\} \subseteq \mathcal{Z}$.
If $m=u$ and $N=m+u$:
\[
\Radc_{N}(F, \Z_{N}) 
\leq 
\Radctd_{m+u}\left(F, \Z_{N}, 1/4\right) 
\leq 2\Radc_{N}(F, \Z_{N}).\enspace
\]
\end{lemma}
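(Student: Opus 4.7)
The plan is to prove both inequalities by direct couplings between $\vec\epsilon$ and $\vec\sigma$; no concentration argument or boundedness assumption on $F$ is required. First I would substitute $m=u$, $N=2m$, and $p=1/4$ into the two definitions to get $\Radc_N(F,\Z_N) = \frac{1}{m}\,\E_{\vec\epsilon}\sup_{f\in F} \sum_{i=1}^N \epsilon_i f(Z_i)$ and $\Radctd_{m+u}(F,\Z_N,1/4) = \frac{2}{m}\,\E_{\vec\sigma}\sup_{f\in F} \sum_{i=1}^N \sigma_i f(Z_i)$, so that the task reduces to sandwiching the expected $\sigma$-supremum between $\tfrac12$ and $1$ times the expected $\epsilon$-supremum.

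For the upper bound $\Radctd_{m+u} \leq 2\Radc_N$, I would use the distributional identity $\sigma_i \stackrel{d}{=} (\epsilon_i+\epsilon_i')/2$, where $\epsilon_i'$ is an independent copy of $\epsilon_i$; this is immediate by inspecting the three possible values of $(\epsilon_i+\epsilon_i')/2$, which are $\pm 1$ each with probability $1/4$ and $0$ with probability $1/2$. Plugging in the coupling coordinate-wise and invoking sub-additivity of the supremum gives $\E_{\vec\sigma}\sup_f \sum \sigma_i f(Z_i) \leq \tfrac12 \E_{\vec\epsilon}\sup_f\sum\epsilon_i f(Z_i) + \tfrac12 \E_{\vec\epsilon'}\sup_f\sum\epsilon'_i f(Z_i) = \E_{\vec\epsilon}\sup_f \sum\epsilon_i f(Z_i)$, and multiplying by $2/m$ yields the upper inequality.

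For the lower bound $\Radc_N \leq \Radctd_{m+u}$, I would introduce an independent Bernoulli$(1/2)$ sequence $\vec B$ and decompose $\epsilon_i = B_i\epsilon_i + (1-B_i)\epsilon_i$. Each of the sequences $(B_i\epsilon_i)_i$ and $((1-B_i)\epsilon_i)_i$ is i.i.d.\ with the marginal law of $\sigma_i$ at $p=1/4$, and the two sequences have the same joint distribution as $\vec\sigma$. Sub-additivity of the supremum applied to the split then gives $\E_{\vec\epsilon}\sup_f \sum \epsilon_i f(Z_i) \leq 2\,\E_{\vec\sigma}\sup_f \sum \sigma_i f(Z_i)$, and dividing by $m$ delivers the remaining inequality.

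The only conceptually nontrivial step is spotting the appropriate coupling in each direction; once it is in place, the rest is a one-line application of sub-additivity of $\sup$ and linearity of expectation. I do not anticipate any serious obstacle, and in particular the result holds verbatim if absolute values are inserted inside the suprema, since the same couplings go through unchanged.
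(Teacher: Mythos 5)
Your argument is correct: both couplings are valid, the substitutions $\tfrac{2}{N}=\tfrac{1}{m}$ and $\tfrac{1}{m}+\tfrac{1}{u}=\tfrac{2}{m}=\tfrac{4}{N}$ are right, and the constants come out exactly as claimed. The route differs from the paper's in two small but genuine ways. For the upper bound the paper simply cites Lemma 1 of El-Yaniv and Pechyony, whereas you give a self-contained proof via the representation $\sigma_i \stackrel{d}{=} (\epsilon_i+\epsilon_i')/2$ and sub-additivity of the supremum --- a clean, explicit substitute for the citation. For the lower bound the paper uses the multiplicative coupling $\sigma_i \stackrel{d}{=} \epsilon_i\eta_i$ with $\eta_i$ Bernoulli$(1/2)$ and applies Jensen's inequality to pull $\E_{\vec \eta}$ inside the supremum, replacing each $\eta_i$ by its mean $1/2$; you instead split $\epsilon_i = B_i\epsilon_i + (1-B_i)\epsilon_i$ into two sequences each distributed as $\vec \sigma$ and invoke sub-additivity. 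These are dual faces of the same idea (your $B_i\epsilon_i$ is exactly the paper's $\eta_i\epsilon_i$) and yield the identical constant: Jensen gives it in one line, while your additive split avoids convexity and makes the factor $2$ transparent. Your closing remark is also right --- both sub-additivity steps survive inserting absolute values inside the suprema, by the triangle inequality --- so the lemma holds verbatim in that variant as well.
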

\begin{proof}
The upper bound was presented in \cite[Lemma 1]{EP09}.
For the lower bound, notice that if $p=1/4$ the i.i.d. signs $\sigma_i$ presented in Definition \ref{def:trc} have the same distribution as $\epsilon_i \eta_i$, where $\epsilon_i$ are i.i.d. Rademacher signs and $\eta_i$ are i.i.d. Bernoulli random variables with parameters $1/2$.
Thus, Jensen's inequality gives:
\begin{align*}
\Radctd_{m+u}\left(F, \Z_{N}, 1/4\right) 
=
\frac{4}{N}
\E_{(\vec \epsilon, \vec \eta)}
\left[
\sup_{f\in F}
\sum_{i=1}^{m + u} \epsilon_i \eta_i f(Z_i)
\right]
\geq
\frac{4}{N}
\E_{\vec \epsilon}
\left[
\sup_{f\in F}
\sum_{i=1}^{m + u} \epsilon_i \frac12 f(Z_i)
\right].\enspace
\end{align*}
\end{proof}

Together with Theorems \ref{thm:tsym} and \ref{thm:dagstuhl-relation} this result shows that when $m=u$ the PRC can not be much larger than transductive Rademacher complexity:
\begin{corollary}
\label{corollary:comparison-prc-trc}
Using notations of Theorem \ref{thm:tsym}, we have:
\[
\E_{\Z_m}\left[\PRC_{m,m/2}(F,\Z_m)\right]
\leq
\left(2 + \frac{4}{\sqrt{2\pi N} - 2}\right)\,\Radctd_{m+u}(F, \Z_{N}, 1/4).\enspace
\]
If functions in $F$ are uniformly bounded by $B$ then we also have a lower bound:
\[
\E_{\Z_m}\left[\PRC_{m,m/2}(F,\Z_m)\right]
\geq
\frac12\Radctd_{m+u}(F, \Z_{N}, 1/4) + \frac{2B}{\sqrt{N}}.\enspace
\]
\end{corollary}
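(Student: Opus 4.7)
The plan is to observe that, by the definition of PRC, the quantity $\PRC_{N,N/2}(F,\Z_N)$ with $N = m+u$ is literally the same thing as $\E_{\Z_m}\sup_{f\in F}(\fa(\Z_u)-\fa(\Z_m))$ when $\Z_m$ is drawn uniformly without replacement from $\Z_N$ with $m = u = N/2$. This identification is the key bridge: it lets us move from the expected PRC over subsamples of $\Z_m$ to a PRC evaluated on the whole population $\Z_N$, which can then be compared to $\Radc_N(F,\Z_N)$ via Theorem \ref{thm:dagstuhl-relation}, and finally to $\Radctd_{m+u}(F,\Z_N,1/4)$ via Lemma \ref{lem:comparison}.

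For the upper bound, I chain the inequalities as follows. First, Theorem \ref{thm:tsym} (its lower symmetrization inequality) gives
\[
\E_{\Z_m}\left[\PRC_{m,m/2}(F,\Z_m)\right] \leq 2\,\E_{\Z_m}\sup_{f\in F}(\fa(\Z_u)-\fa(\Z_m)) = 2\,\PRC_{N,N/2}(F,\Z_N).
\]
Then the multiplicative bound \eqref{eq:dagstuhl-1} of Theorem \ref{thm:dagstuhl-relation} (applied on $\Z_N$ with even cardinality $N$) yields $\PRC_{N,N/2}(F,\Z_N) \leq (1 + 2/(\sqrt{2\pi N}-2))\,\Radc_N(F,\Z_N)$, and finally the lower part of Lemma \ref{lem:comparison} gives $\Radc_N(F,\Z_N)\leq \Radctd_{m+u}(F,\Z_N,1/4)$. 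Multiplying the three bounds together produces exactly the claimed factor $2 + 4/(\sqrt{2\pi N}-2)$.

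For the lower bound, I reverse the direction by picking up the upper symmetrization inequality in Theorem \ref{thm:tsym} with $n = m/2$, which gives $\E_{\Z_m}[\PRC_{m,m/2}(F,\Z_m)] \geq \PRC_{N,N/2}(F,\Z_N)$. Next I apply the additive bound \eqref{eq:dagstuhl-2} of Theorem \ref{thm:dagstuhl-relation} (which requires uniform boundedness by $B$) to get $\PRC_{N,N/2}(F,\Z_N) \geq \Radc_N(F,\Z_N) - 2B/\sqrt{N}$. The upper half of Lemma \ref{lem:comparison} gives $\Radc_N(F,\Z_N) \geq \tfrac{1}{2}\Radctd_{m+u}(F,\Z_N,1/4)$, and concatenating these two inequalities yields the stated lower bound (the sign in front of $2B/\sqrt{N}$ in the corollary statement appears to be a typo and should be minus).

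There is no real obstacle: every step is an invocation of an already-established result, and the only small bookkeeping issue is being careful to use the multiplicative form of Theorem \ref{thm:dagstuhl-relation} for the upper direction (so that the leading constant of the TRC in the bound does not blow up) and the additive form for the lower direction (to keep an explicit $O(1/\sqrt{N})$ slack rather than a multiplicative constant $<1$).
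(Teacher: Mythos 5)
Your proposal is correct and is essentially the paper's own proof: the one-line argument given there is precisely your ``key bridge'' identity $\E_{\Z_m}\bigl[\sup_{f\in F}(\fa(\Z_u)-\fa(\Z_m))\bigr]=\PRC_{N,m}(F,\Z_N)$, after which the stated constants follow by chaining the desymmetrization/symmetrization inequalities of Theorem~\ref{thm:tsym} with the multiplicative bound \eqref{eq:dagstuhl-1} (respectively the additive bound \eqref{eq:dagstuhl-2}) applied on $\Z_N$ and Lemma~\ref{lem:comparison}, exactly as you do. You are also right that the $+\frac{2B}{\sqrt{N}}$ in the stated lower bound must be $-\frac{2B}{\sqrt{N}}$ (the plus sign is contradicted, e.g., by the constant class $F_m'$ of Lemma~\ref{lemma:achievable}).
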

\begin{proof}
Simply notice that $\E_{\Z_m}\left[\sup_{f\in F} \left( \fa(\Z_u) - \fa(\Z_m) \right)\right] = \PRC_{N,m}(F,\Z_N)$.
\end{proof}

\section{Transductive Risk Bounds}
Next we will use the results of Sect. \ref{section:results} to obtain a new transductive risk bound.
First we will shortly describe the setting.

We will consider the second, distribution-free setting of transductive learning described in the introduction.
Fix any finite \emph{general population} of input-output pairs $\Z_N = \{(x_i,y_i)\}_{i=1}^N \subseteq \X\times\Y$, where $\X$ and $\Y$ are arbitrary input and output spaces.
We make no assumptions regarding underlying source of $\Z_N$.
The learner receives the labeled \emph{training set} $\Z_m$ consisting of $m<N$ elements sampled uniformly without replacement from $\Z_N$.
The remaining \emph{test set} $\Z_u := \Z_N \setminus \Z_m$ is presented to the learner \emph{without labels} (we will use $\X_u$ to denote the inputs of $\Z_u$).
The goal of the learner is to find a predictor in the fixed \emph{hypothesis class} $\Hyp$ based on the training sample $\Z_m$ and unlabelled test points $\X_u$, which has a small test risk measured using bounded \emph{loss function} $\ell\colon \Y\times\Y\to[0,1]$.
For $h\in \Hyp$ and $(x,y)\in\Z_N$ denote $\ell_h(x,y) = \ell\bigl(h(x),y\bigr)$ and also denote the loss class $L_{\Hyp} = \{\ell_h \colon h\in\Hyp\}$.
Then the test and training risks of $h\in\Hyp$ are defined as $\err_u(h) := \overline{\ell_h}(\Z_u)$ and $\err_m(h) : =\overline{\ell_h}(\Z_m)$ respectively.

Following risk bound in terms of TRC was presented in \cite[Corollary 2]{EP09}:
\begin{theorem}[\cite{EP09}]
\label{thm:risk-bound-ep09}
If $m=u$ then with probability at least $1-\delta$ over the random training set $\Z_m$ any $h\in\Hyp$ satisfies:
\begin{equation}
\label{eq:ep-risk-bound}
\err_u(h) 
\leq
\err_m(h)
+
\Radctd_{m+u}\left(L_{\Hyp}, \Z_{N}, 1/4\right) 
+
11\sqrt{\frac{2}{N}} + \sqrt{\frac{2N\log(1/\delta)}{(N-1/2)^2}}.\enspace
\end{equation}
\end{theorem}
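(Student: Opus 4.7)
The plan is to combine three ingredients: the TRC-based symmetrization inequality \eqref{eq:tsym-ep09} from Theorem \ref{thm:overview}, a McDiarmid-type concentration inequality adapted to sampling without replacement, and elementary arithmetic specialized to the case $m=u$, $B=1$.

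First, introduce the random quantity $\phi(\Z_m) := \sup_{h\in\Hyp}\bigl(\err_u(h) - \err_m(h)\bigr)$, viewed as a function of the training set $\Z_m$ drawn uniformly without replacement from the fixed population $\Z_N$ (so that $\Z_u$ is deterministically determined by $\Z_m$). Since $\err_u(h) \leq \err_m(h) + \phi(\Z_m)$ holds uniformly in $h$, it suffices to produce a high-probability upper bound on $\phi(\Z_m)$. Applying the symmetrization inequality \eqref{eq:tsym-ep09} to the loss class $L_{\Hyp}$ (bounded by $B=1$) yields
\[
\E_{\Z_m}[\phi(\Z_m)] \leq \Radctd_{m+u}(L_{\Hyp}, \Z_N, p_0) + c_0 \cdot \frac{N\sqrt{\min(m,u)}}{mu}.
\]
For $m=u=N/2$ we have $p_0 = mu/N^2 = 1/4$, and the additive correction simplifies to $c_0 \cdot \frac{N\sqrt{N/2}}{(N/2)^2} = 2\sqrt{2}\,c_0/\sqrt{N}$, which is bounded by $11\sqrt{2/N}$ using $c_0 < 5.05$. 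This produces the first slack term.

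Second, I would verify that $\phi$ satisfies a bounded-differences condition: swapping one point of $\Z_m$ with one of $\Z_u$ changes every $\err_u(h) - \err_m(h)$ by at most $1/m + 1/u = 4/N$, and hence also changes the supremum by at most $4/N$. Under this condition, the McDiarmid-type concentration inequality for sampling without replacement established in \cite{EP09} applies to $\phi(\Z_m)$ and gives, with probability at least $1-\delta$,
\[
\phi(\Z_m) \leq \E_{\Z_m}[\phi(\Z_m)] + \sqrt{\frac{2N\log(1/\delta)}{(N-1/2)^2}},
\]
which is precisely the second slack term in the statement. Chaining this with the expectation bound from the first step and with $\err_u(h) \leq \err_m(h) + \phi(\Z_m)$ finishes the proof.

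The main obstacle is the concentration step: because the $m$ training points are sampled without replacement they are not independent, so classical McDiarmid does not apply. The refined slack $\sqrt{2N\log(1/\delta)/(N-1/2)^2}$ — with its characteristic $(N-1/2)^2$ in the denominator rather than the naive $N^2$ one would obtain by treating the sample as i.i.d. — reflects the tighter variance control available under the permutation distribution, and is exactly the form delivered by the bounded-differences inequality for sampling without replacement proved in \cite{EP09}. Invoking that inequality as a black box is the technical crux; everything else is symmetrization plus elementary algebra.
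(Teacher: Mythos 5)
Your proposal is correct and follows exactly the intended route: the paper does not reprove this cited result of \cite{EP09}, but its ingredients are precisely the ones you use --- the symmetrization bound \eqref{eq:tsym-ep09} specialized to $m=u$ (giving $p_0=1/4$ and $2c_0\sqrt{2/N}<11\sqrt{2/N}$) combined with the sampling-without-replacement concentration of Theorem~\ref{thm:McDiarmid} with $c=1/m+1/u=4/N$, which is also the same template the paper uses for its own Theorem~\ref{thm:risk-bound}. Your arithmetic for both slack terms checks out.
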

Using results of Sect. \ref{section:results} we obtain the following risk bound:
\begin{theorem}
\label{thm:risk-bound}
If $m=u$ and $n\in\{1,\dots,m-1\}$ then with probability at least $1-\delta$ over the random training set $\Z_m$ any $h\in\Hyp$ satisfies:
\begin{equation}
\label{eq:new-risk-bound}
\err_u(h) 
\leq
\err_m(h) 
+
\E_{\S_m}\left[\PRC_{m,n}(L_{\Hyp},\Z_m)\right] + \sqrt{\frac{2N\log(1/\delta)}{(N-1/2)^2}}.\enspace
\end{equation}
Moreover, with probability at least $1-\delta$ any $h\in\Hyp$ satisfies:
\begin{equation}
\label{eq:new-risk-bound-dd}
\err_u(h) 
\leq
\err_m(h) 
+
\PRC_{m,n}(L_{\Hyp},\Z_m) + 2\sqrt{\frac{2N\log(2/\delta)}{(N-1/2)^2}}.\enspace
\end{equation}
%Finally, if $h_m=h_m(\S_m,\X_u)$ minimizes the training risk $\err_m(h)$ over $\Hyp$, then:
%\[
%\err_u(h_m)
%-
%\inf_{h\in\Hyp} \err_u(h)
%\leq
%2\PRC_{m,n}^{abs}(L_{\Hyp},\S_m) + 4\sqrt{\frac{2N\log(2/\delta)}{(N-1/2)^2}},
%\]
%where $\PRC^{abs}_{m,n}$ is a version of PRC with an absolute value inside supremum.
\end{theorem}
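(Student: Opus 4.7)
The plan is to combine a concentration inequality for sampling without replacement with the symmetrization result of Theorem~\ref{thm:tsym} and, for the data-dependent bound, to concentrate the random PRC around its mean with a second application of the same concentration tool.

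First I would control the uniform deviation $\phi(\Z_m) := \sup_{h \in \Hyp}\bigl(\err_u(h) - \err_m(h)\bigr)$, which dominates $\err_u(h) - \err_m(h)$ for every $h \in \Hyp$ simultaneously. Since $\ell$ takes values in $[0,1]$, swapping an element of $\Z_m$ with an element of $\Z_u$ alters $\err_u(h)$ by at most $1/u$ and $\err_m(h)$ by at most $1/m$, so by at most $2/m$ when $m=u$; this bound is uniform in $h$, so $\phi$ has bounded differences of order $2/m$ with respect to swaps. Then I would invoke the bounded-differences concentration inequality for uniform sampling without replacement from \cite{EP09} (the same one used to derive \eqref{eq:ep-risk-bound}) to obtain, with probability at least $1-\delta$,
\[
\phi(\Z_m) \;\le\; \E_{\Z_m}[\phi(\Z_m)] \;+\; \sqrt{\frac{2N \log(1/\delta)}{(N-1/2)^2}}.
\]

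Next I would replace the expectation by the PRC using Theorem~\ref{thm:tsym}, which gives $\E_{\Z_m}[\phi(\Z_m)] \le \E_{\Z_m}[\PRC_{m,n}(L_{\Hyp},\Z_m)]$ for any $n \in \{1,\dots,m-1\}$. Plugging in and rearranging yields \eqref{eq:new-risk-bound}. Note that $\ell \in [0,1]$ is exactly what Theorem~\ref{thm:tsym} needs (no boundedness is actually required for the upper symmetrization), so no additional slack appears at this step — the advantage over \eqref{eq:ep-risk-bound} comes precisely from the fact that PRC avoids the $11\sqrt{2/N}$ slack in the symmetrization step.

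For the data-dependent bound \eqref{eq:new-risk-bound-dd} the additional step is to concentrate $\psi(\Z_m) := \PRC_{m,n}(L_{\Hyp},\Z_m)$ around its expectation. Write $\psi(\Z_m) = \E_{\Z_n} \sup_f(\fa(\Z_k) - \fa(\Z_n))$, where the random subset $\Z_n \subseteq \Z_m$ of size $n$ is drawn independently. When one element of $\Z_m$ is swapped with an element outside, I couple the internal randomization so that $\Z_n$ is chosen by the same set of indices; the modified element then lies in either $\Z_n$ or $\Z_k$, and the corresponding empirical average changes by at most $1/n$ or $1/k$ (losses are in $[0,1]$), so inside the supremum we get a bounded difference of at most $\max(1/n, 1/k) \le 2/m$. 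Applying the sampling-without-replacement concentration inequality to $\psi$ gives, with probability $\ge 1-\delta/2$, $\E_{\Z_m}[\psi] \le \psi(\Z_m) + \sqrt{\frac{2N \log(2/\delta)}{(N-1/2)^2}}$; a union bound with the deviation bound on $\phi$ (also at level $\delta/2$) yields \eqref{eq:new-risk-bound-dd}.

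The main obstacle I anticipate is the clean verification of the bounded-difference constant for $\psi$: one has to be careful that the coupling of the internal $\Z_n$-randomization is compatible with the external swap and that the supremum-then-expectation structure is treated correctly, rather than naïvely bounding each individual partition's contribution. Once that Lipschitz-in-swap estimate is in place, the rest is a direct substitution of Theorem~\ref{thm:tsym} into the standard concentration template already used in \cite{EP09}.
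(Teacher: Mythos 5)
Your overall route is exactly the paper's: apply the without-replacement bounded-difference inequality of \cite{EP09} (improved in \cite{CMP+09}) to $\sup_{h}(\err_u(h)-\err_m(h))$ with constant $c=\frac1m+\frac1u$, substitute the upper symmetrization bound of Theorem~\ref{thm:tsym}, and then concentrate $\PRC_{m,n}(L_{\Hyp},\Z_m)$ itself with $c=\frac2m$ plus a union bound at level $\delta/2$. The first inequality and the union-bound bookkeeping are fine.

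There is, however, a genuine error in the one step you yourself flagged as delicate: the bounded-difference constant for $\psi(\Z_m)=\PRC_{m,n}(L_{\Hyp},\Z_m)$. You argue that under the index coupling the swapped point lands in $\Z_n$ or $\Z_k$ and changes the relevant average by at most $1/n$ or $1/k$, ``so we get a bounded difference of at most $\max(1/n,1/k)\le 2/m$.'' That last inequality is backwards: since $n+k=m$, one has $\max(1/n,1/k)=1/\min(n,k)\ge 2/m$, with equality only when $n=k=m/2$; for $n=1$ the maximum equals $1$. Taking the worst case over the internal randomization therefore does \emph{not} give $2/m$ for general $n\in\{1,\dots,m-1\}$, and the theorem is stated for general $n$. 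The correct argument uses that $\PRC_{m,n}$ is an \emph{expectation} over the uniform choice of $\Z_n$: under the index coupling the swapped index falls in $\Z_n$ with probability $n/m$ (contributing a change of at most $1/n$) and in $\Z_k$ with probability $k/m$ (contributing at most $1/k$), so the bounded difference of the expectation is at most $\frac{n}{m}\cdot\frac1n+\frac{k}{m}\cdot\frac1k=\frac2m$ exactly. With that replacement your proof matches the paper's.
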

\begin{proof}
The proof can be found in Sect. \ref{sect:proof-risk-bound}.
\end{proof}
We conclude by comparing risk bounds of Theorems \ref{thm:risk-bound} and \ref{thm:risk-bound-ep09}:

1. First of all, the upper bound of \eqref{eq:new-risk-bound-dd} is computable.
This bound is based on the concentration argument, which shows that the expected PRC (appearing in~\eqref{eq:new-risk-bound}) can be nicely estimated using the training set.
Meanwhile, the upper bound of~\eqref{eq:ep-risk-bound} depends on the \emph{unknown} labels of the test set through TRC.
In order to make it computable the authors of \cite{EP09} resorted to the contraction inequality, which allows to drop any dependence on the labels for Lipschitz losses, which is known to be a loose step \cite{M14}.

2. Moreover, we would like to note that for binary loss function TRC (as well as the Rademacher complexity) does not depend on the labels at all.
Indeed, this can be shown by writing $\ell_{01}(y,y') = (1 - y y')/2$ for $y,y'\in\{-1,+1\}$ and noting that $\sigma_i$ and $\sigma_i y$ are identically distributed for $\sigma_i$ used in Definition~\ref{def:trc}.
This is not true for PRC, which is \emph{sensitive} to the labels even in this setting.
As a future work we hope to use this fact for analysis in the low noise setting \cite{BBL05}.

3. The slack term appearing in \eqref{eq:new-risk-bound} is significantly smaller than the one of~\eqref{eq:ep-risk-bound}.
For instance, if $\delta = 0.01$ then the latter is $13$ times larger.
This is caused by the additive term in symmetrization inequality \eqref{eq:tsym-ep09}.
At the same time, Corollary \ref{corollary:comparison-prc-trc} shows that the complexity term appearing in \eqref{eq:new-risk-bound} is at most two times larger than TRC, appearing in \eqref{eq:ep-risk-bound}.

4. Comparison result of Theorem \ref{thm:dagstuhl-relation} shows that the upper bound of \eqref{eq:new-risk-bound-dd} is also tighter than the one which can be obtained using \eqref{eq:tsym-tbk14} and conditional Rademacher complexity.

5. Similar upper bounds (up to extra factor of 2) also hold for the \emph{excess risk} $\err_u(h_m)- \inf_{h\in\Hyp} \err_u(h)$, where $h_m$ minimizes the training risk $\err_m$ over $\Hyp$. This can be proved using a similar argument to Theorem \ref{thm:risk-bound}.

6. Finally, one more application of the concentration argument can simplify the computation of PRC, by estimating the expected value appearing in Definition \ref{def:PRC} with only one random partition of $\Z_m$.

%\begin{theorem}
%\label{thm:concentration-prc}
%Fix \hbox{$N$-element} subset $\Z_N\subseteq \Z$ and let $m<N$ elements of $\Z_m$ be sampled uniformly without replacement from $\Z_N$.
%Denote $\Z_u := \Z_N \setminus \Z_m$ with $\card(\Z_u) = u := N-m$.
%Let functions in $F$ be uniformply bounded by $B$.
%For any $n\in\{1,\dots,m-1\}$, $\epsilon > 0$ and $\Delta_m := 1 - 1/\bigl(2 \max(m,u)\bigr)$:
%\[
%\Prob\left\{
%\left|\PRC_{m,n}(F, \Z_m)
%-
%\E_{\Z_m}\left[\PRC_{m,n}(F, \Z_m)\right]\right|
%\geq \epsilon
%\right\}
%\leq
%2\exp\left(
%-\frac{\epsilon^2 m (N-1/2)}{2u}\Delta_m
%\right).
%\]
%The result also holds if we include an absolute value inside supremum in $\PRC_{m,n}(F)$.
%\end{theorem}

\section{Full Proofs}
\subsection{Proof of Theorem \ref{thm:tsym}}
\label{sect:proof-symmetrization}
\begin{lemma}
\label{lemma:Mean}
For $0<m\leq N$ let $\S_m:=\{s_1,\dots,s_m\}$ be sampled uniformly without replacement from a finite set of real numbers $\C=\{c_1,\dots,c_N\}\subset \mathbb{R}$.
Then:
\[
\E_{\S_m}\left[\frac{1}{m}\sum_{i=1}^m s_i\right]
=
\frac{1}{{N \choose m}}\sum_{\S_m \subseteq \C}\frac{1}{m}\sum_{z\in \S_m} z
=
\frac{1}{m {N \choose m}}\sum_{i=1}^N \binom{N-1}{m-1} c_i
=
\frac{1}{N} \sum_{i=1}^N c_i.
\]
\end{lemma}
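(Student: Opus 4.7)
The plan is to prove the three equalities in the statement by a standard counting exchange; no real obstacles are anticipated, since the lemma simply records the familiar fact that the uniform-without-replacement sample mean is an unbiased estimator of the population mean. For the first equality, uniform sampling without replacement assigns probability $1/\binom{N}{m}$ to each of the $\binom{N}{m}$ size-$m$ subsets of $\C$, so the expectation unfolds directly into the displayed average over subsets.

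For the second equality I would exchange the order of summation. Instead of summing first over the random subset $\S_m$ and then over its elements $z$, I sum first over $c_i \in \C$ and count the number of size-$m$ subsets of $\C$ that contain $c_i$. This count equals the number of ways to pick the remaining $m-1$ elements from the other $N-1$ elements, namely $\binom{N-1}{m-1}$, and pulling this constant out of the inner sum gives the middle expression.

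The third equality then follows from the factorial identity $\binom{N-1}{m-1} = (m/N)\binom{N}{m}$, which, together with the outer $1/m$ prefactor and the $1/\binom{N}{m}$ normalization, collapses to $1/N$ in front of $\sum_{i=1}^N c_i$. The argument is entirely elementary; the only ``obstacle'' is bookkeeping, and the statement itself already displays the full chain of equalities.
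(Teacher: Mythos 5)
Your proof is correct and follows exactly the route the paper intends: the paper states this lemma without a separate proof, treating the displayed chain of equalities as self-explanatory, and your counting exchange (each $c_i$ belongs to $\binom{N-1}{m-1}$ of the $\binom{N}{m}$ subsets, then $\binom{N-1}{m-1} = \frac{m}{N}\binom{N}{m}$) is precisely the justification for that chain.
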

\begin{proof}[of Theorem \ref{thm:tsym}]
Fix any \emph{positive} integers $n$ and $k$ such that $n + k = m$, which implies $n<m$ and $k < m = u$.
Note that Lemma \ref{lemma:Mean} implies:
\[
\fa(\Z_u)= 
\E_{\S_k}\left[\fa(\S_k)\right],\quad
\fa(\Z_m)= 
\E_{\S_n}\left[\fa(\S_n)\right],
\]
where $\S_k$ and $\S_n$ are sampled uniformly without replacement from $\Z_u$ and $\Z_m$ respectively.
Using Jensen's inequality we get:
\begin{align}
\notag
\E_{\Z_m}\sup_{f\in F} \left( \fa(\Z_u) - \fa(\Z_m) \right)
&=
\E_{\Z_m}
\sup_{f\in F}
\left(
\E_{\S_k}\left[\fa(\S_k)\right] - \E_{\S_n}\left[\fa(\S_n)\right]
\right)\\
\label{eq:blabla}
&\leq
\E_{(\Z_m,\S_k,\S_n)}
\sup_{f\in F}
\bigl(\fa(\S_k) - \fa(\S_n)\bigr).
\end{align}
The marginal distribution of $(\S_k,\S_n)$, appearing in \eqref{eq:blabla}, can be equivalently described by first sampling $\Z_m$ from $\Z_N$, then $\S_n$ from $\Z_m$ (both times uniformly without replacement), and setting $\S_k := \Z_m\setminus \S_n$ (recall that $n+k=m$).
Thus
\[
\E_{(\Z_m,\S_k,\S_n)}
\sup_{f\in F}
\bigl(\fa(\S_k) - \fa(\S_n)\bigr)
=
\E_{\Z_m}\left[
\E_{\S_n}\left[
\sup_{f\in F}
\bigl(\fa(\Z_m\setminus \S_n) - \fa(\S_n)\bigr) \bigg|
\Z_m \right]
\right],
\]
which completes the proof of the upper bound.

We have shown that for $n\in\{1,\dots,m-1\}$ and $k:=m-n$:
\begin{equation}
\label{eq:tsym-proof-step-1}
\E_{\Z_m}\left[\PRC_{m,n}(F,\Z_m)\right]
=
\E_{(\Z_k,\Z_n)}
\sup_{f\in F}
\bigl(\fa(\Z_k) - \fa(\Z_n)\bigr),
\end{equation}
where $\Z_n$ and $\Z_k$ are sampled uniformly without replacement from $\Z_N$ and $\Z_N\setminus \Z_n$ respectively.
Let $\Z_{m-n}$ be sampled uniformly without replacement from $\Z_N\setminus (\Z_n\cup \Z_k)$ and let $\Z_{u-k}$ be the remaining $u-k$ elements of $\Z_N$.
Using Lemma \ref{lemma:Mean} once again we get:
\[
\E\left[\fa(\Z_{m-n}) \big| (\Z_n,\Z_k)\right]
=
\E\left[\fa(\Z_{u-k})\big| (\Z_n,\Z_k)\right].
\]
We can rewrite the r.h.s.\:of \eqref{eq:tsym-proof-step-1} as:
\begin{align*}
&\E_{(\Z_n,\Z_k)}
\sup_{f\in F}
\left(\fa(\Z_k) - \fa(\Z_n) + 
\E\left[
\fa(\Z_{u-k}) - \fa(\Z_{m-n})\big| (\Z_n,\Z_k)\right]
\right)\\
&\leq
\E
\sup_{f\in F}
\left(\fa(\Z_k) - \fa(\Z_n) + 
\fa(\Z_{u-k})
-
\fa(\Z_{m-n})
\right),
\end{align*}
where we have used Jensen's inequality.
If we take $n^* = k^* = m/2$ we get
\[
\E_{\Z_m}\left[\PRC_{m,m/2}(F,\Z_m)\right]
\leq
\E
\sup_{f\in F}
\left(
2\fa(\Z_{k^*} \cup \Z_{u-k^*}) - 
2\fa(\Z_{n^*} \cup \Z_{m-n^*})
\right).
\]
It is left to notice that the random subsets $\Z_{k^*} \cup \Z_{u-k^*}$ and $\Z_{n^*} \cup \Z_{m-n^*}$ have the same distributions as $\Z_u$ and $\Z_m$.
\end{proof}

\subsection{Proof of Theorem \ref{thm:dagstuhl-relation}}
\label{sect:proof-dagstuhl}
Let $m=2\cdot n$,
$\vepsilon = \{\epsilon_i\}_{i=1}^{m}$ be i.i.d.\:Rademacher signs, 
and $\veta = \{\eta_i\}_{i=1}^{m}$ be a uniform random permutation of a set containing $n$ plus and $n$ minus signs.
The proof of Theorem \ref{thm:dagstuhl-relation} is based on the coupling of random variables $\vepsilon$ and $\veta$, which is described in Lemma \ref{lemma:coupling}.
We will need a number of definitions.
Consider binary cube $B_m:= \{-1,+1\}^m$.
Denote $S_m:=\left\{v\in B_m\colon \sum_{i=1}^m v_i = 0\right\}$,
which is a set of all the vectors in $B_m$ having equal number of plus and minus signs.
For any $v\in B_m$ denote $\|v\|_1 = \sum_{i=1}^m |v_i|$ and consider the following set:
\[
T(v) = \arg \min_{v' \in S_m} \|v - v'\|_1,
\]
which consists of the points in $S_m$ closest to $v$ in Hamming metric.
For any $v\in B_m$ let $t(v)$ be a random element of $T(v)$, distributed uniformly.
We will use $t_i(v)$ to denote $i$-th coordinate of the vector $t(v)$.
\begin{remark}
\label{remark:coupling}
If $v\in S_m$ then $T(v) = \{v\}$.
Otherwise, $T(v)$ will clearly contain more than one element of $S_m$.
Namely, it can be shown, that if for some positive integer $q$ it holds that $\sum_{i=1}^m v_i = q$, then $q$ is necessarily even and $T(v)$ consists of all the vectors in $S_m$ which can be obtained by replacing $q/2$ of $+1$ signs in $v$ with $-1$ signs, and thus in this case $\mathrm{card}\bigl(T(v)\bigr) = {(m+q)/2 \choose q/2}$.
\end{remark}
\begin{lemma}[Coupling]
\label{lemma:coupling}
Assume that $m = 2\cdot n$. Then the random sequence~$t(\vec \epsilon)$ has the same distribution as $\veta$.
\end{lemma}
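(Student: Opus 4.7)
The plan is to show that $t(\vec\epsilon)$ is uniformly distributed on $S_m$, which is exactly the distribution of $\vec\eta$ (the latter being, by construction, a uniformly random element of $S_m$, i.e.\ a uniform permutation of an $n$-plus/$n$-minus sequence). Two facts are essentially free: first, $t(\vec\epsilon)\in S_m$ almost surely by definition of $T$ and $t$; second, $\vec\epsilon$ is uniformly distributed on $B_m$. So it is enough to prove that $\Pr[t(\vec\epsilon)=w]$ does not depend on $w\in S_m$.

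My preferred route is a symmetry argument. Let $\mathfrak{S}_m$ act on $B_m$ by permuting coordinates. Three observations drive the proof: (i) the distribution of $\vec\epsilon$ is invariant under this action; (ii) the map $T$ is equivariant, $T(\pi v)=\pi T(v)$, because Hamming distance and the defining constraint $\sum v_i'=0$ are both permutation-invariant; (iii) the uniform random selection inside $T(v)$ therefore satisfies $\pi\,t(v)\stackrel{d}{=}t(\pi v)$. Writing
\[
\Pr[t(\vec\epsilon)=w]=\sum_{v\in B_m}\Pr[\vec\epsilon=v]\,\frac{\mathbbm{1}[w\in T(v)]}{\mathrm{card}(T(v))},
\]
the substitution $v\mapsto\pi v$ together with (i)--(iii) immediately yields $\Pr[t(\vec\epsilon)=w]=\Pr[t(\vec\epsilon)=\pi w]$ for every $\pi\in\mathfrak{S}_m$. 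Since $\mathfrak{S}_m$ acts transitively on $S_m$, the probability is constant on $S_m$, and combined with $t(\vec\epsilon)\in S_m$ a.s.\ it must equal $1/\mathrm{card}(S_m)=1/\binom{m}{n}$. This matches the law of $\vec\eta$ and concludes the argument.

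The main obstacle I anticipate is a clean handling of the joint randomness: the uniform choice inside $T(v)$ must be realised by a source independent of $\vec\epsilon$ for the tower formula above to apply verbatim. If that bookkeeping becomes awkward, my fallback is a direct combinatorial verification based on Remark~\ref{remark:coupling}. Fix $w\in S_m$ and parametrise $v\in B_m$ by $q$, where $\sum_i v_i=2q$; the remark tells us $\mathrm{card}(T(v))=\binom{n+|q|}{|q|}$, and a short count shows that the number of such $v$ with $w\in T(v)$ equals $\binom{n}{|q|}$ (one must agree with $w$ on all coordinates where $w$ attains the minority sign of $v$, and freely choose $|q|$ extra sign-flips among the remaining $n$ positions). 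The required identity reduces to
\[
\sum_{q=-n}^{n}\binom{2n}{n-|q|}=2^{2n},
\]
which is the binomial theorem after folding the sum around $q=0$. This gives $\Pr[t(\vec\epsilon)=w]=\binom{2n}{n}^{-1}$ independently of $w$, recovering the same conclusion without invoking symmetry.
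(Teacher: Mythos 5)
Your primary argument is exactly the paper's proof: the paper also observes that $t(\vec\epsilon)$ is supported on $S_m$ and concludes uniformity from exchangeability under coordinate permutations, only you have spelled out the equivariance $T(\pi v)=\pi T(v)$ that the paper leaves implicit. The proposal is correct (and your combinatorial fallback also checks out, since $\binom{2n}{n}\binom{n}{|q|}/\binom{n+|q|}{|q|}=\binom{2n}{n-|q|}$), so nothing further is needed.
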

\begin{proof}
Note that the support of $t(\vec \epsilon)$ is equal to $S_m$.
From symmetry it is easy to conclude that the distribution of $t(\vec \epsilon)$ is exchangable. 
This means that it is invariant under permutations and as a consequence uniform on $S_m$.
\end{proof}

Next result is in the core of the multiplicative upper bound \eqref{eq:dagstuhl-1}.
\begin{lemma}
\label{lemma:coupling-expect}
Assume that $m=2\cdot n$.
For any $q\in\{1,\dots,m\}$ the following holds:
\[
\E[\epsilon_q | t(\vepsilon)]
=
\left(1 - 2^{-m}{m \choose n}\right)  t_q(\vepsilon)
\geq
\left(1 - 2(2\pi m)^{-1/2}\right) t_q(\vepsilon).
\]
\end{lemma}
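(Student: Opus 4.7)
The plan is to show that conditional on $t(\vepsilon)$, the vector $\vepsilon$ behaves like a sign-proportional mixture: $\E[\epsilon_q \mid t(\vepsilon) = v] = c\,v_q$ for some constant $c$ independent of $v$ and $q$, then identify $c$ by a direct computation involving $|W|$ with $W := \sum_{i=1}^m \epsilon_i$.

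First I would establish the linear structure of the conditional expectation. Two symmetries of the joint law of $(\vepsilon, t(\vepsilon))$ do the work: (a) since $\vepsilon$ is exchangeable and the construction $v \mapsto T(v)$ commutes with coordinate permutations, for every permutation $\pi$ fixing $q$ we have $\E[\epsilon_q \mid t(\vepsilon) = v] = \E[\epsilon_q \mid t(\vepsilon) = \pi v]$; since any two $v,v' \in S_m$ with $v_q = v'_q$ are related by such a permutation (they agree in one slot and then have the same number of $+1$'s among the remaining coordinates), the conditional expectation depends only on $v_q$; (b) since $\vepsilon \stackrel{d}{=} -\vepsilon$ and $t(-\vepsilon) = -t(\vepsilon)$, the conditional expectation is an odd function of $v_q$. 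Together this forces $\E[\epsilon_q \mid t(\vepsilon)] = c\,t_q(\vepsilon)$.

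Next I would identify $c$. Multiplying both sides by $t_q(\vepsilon)$, using $t_q(\vepsilon)^2 = 1$, and taking expectation yields $c = \E[\epsilon_q t_q(\vepsilon)]$. Exchangeability (again together with the permutation-equivariance of $t$) makes this quantity independent of $q$, so I can symmetrize:
\[
c \;=\; \frac{1}{m}\E\langle \vepsilon, t(\vepsilon)\rangle \;=\; \frac{1}{m}\E\bigl[m - 2\,d_H(\vepsilon, t(\vepsilon))\bigr].
\]
By the characterization of $T(v)$ recalled in Remark \ref{remark:coupling}, the Hamming distance from $\vepsilon$ to the nearest point of $S_m$ equals $|W|/2$, so $c = 1 - \frac{1}{m}\E|W|$. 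A short combinatorial identity gives $\E|W| = m\binom{m}{n}2^{-m}$ (this is a classical simple random walk identity which I would verify by writing $\E|W| = 2\sum_{j > n}(2j-m)\binom{m}{j}2^{-m}$ and applying $j\binom{m}{j}=m\binom{m-1}{j-1}$ to telescope the sum). This produces the equality $c = 1 - 2^{-m}\binom{m}{n}$.

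The inequality follows from the standard Stirling-based bound $\binom{2n}{n} \leq 4^n/\sqrt{\pi n}$, which gives $2^{-m}\binom{m}{n} \leq 1/\sqrt{\pi n} = 2/\sqrt{2\pi m}$. The main obstacle is the first step: justifying cleanly that the conditional expectation collapses to a scalar multiple of $t_q(\vepsilon)$, in particular that it does not depend on the values $v_j$ for $j \neq q$. Once this structural fact is in hand the rest is a short exchangeability computation and one standard central binomial estimate.
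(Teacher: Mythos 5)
Your proof is correct, and it takes a genuinely different route from the paper's. The paper attacks the conditional flip probability $\Prob\{\epsilon_q \neq t_q(\vepsilon)\mid t(\vepsilon)=\vec e\}$ head-on: after Bayes' rule it runs a three-case analysis over all sign sequences $\vec s$ according to the sign of their sum, counts the sets $U_q(\vec e)$ of compatible sequences, and closes with the induction identity $\sum_{j=1}^n \binom{n-1}{n-j}\big/\binom{n+j}{j}=\tfrac12$ to get the flip probability $\tfrac12\binom{m}{n}2^{-m}$. You instead first pin down the structure $\E[\epsilon_q\mid t(\vepsilon)]=c\,t_q(\vepsilon)$ via permutation equivariance and sign-flip symmetry of the joint law of $(\vepsilon_q, t(\vepsilon))$ — both legitimate, since $T(\pi v)=\pi T(v)$, $T(-v)=-T(v)$, and the auxiliary randomization is uniform on $T(\cdot)$ — and then identify $c$ globally: $c=\tfrac1m\E\langle\vepsilon,t(\vepsilon)\rangle=1-\tfrac1m\E|W|$, using the deterministic fact (implicit in Remark \ref{remark:coupling}) that the Hamming distance from $\vepsilon$ to the nearest balanced vector is exactly $|W|/2$, together with the classical identity $\E|W|=m\binom{m}{n}2^{-m}$; your telescoping via $(2j-m)\binom{m}{j}=m\binom{m-1}{j-1}-m\binom{m-1}{j}$ checks out, as does the final bound, since $1/\sqrt{\pi n}=2/\sqrt{2\pi m}$. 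What your route buys is the elimination of both the delicate case analysis and the binomial-sum identity: the single aggregate quantity $\E|W|$ absorbs all the combinatorics. What the paper's route buys is slightly more information, namely the exact conditional law of each coordinate $\epsilon_q$ given $t(\vepsilon)$ rather than only its mean, though only the mean is used downstream. (Your write-up shares with the paper the cosmetic quirk that multiplying the scalar inequality $1-2^{-m}\binom{m}{n}\geq 1-2(2\pi m)^{-1/2}$ by $t_q(\vepsilon)$ is literally valid only when $t_q(\vepsilon)=+1$; in the application to Theorem \ref{thm:dagstuhl-relation} only the equality and the positivity of the scalar factor are needed, so this is harmless in both cases.)
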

\begin{proof}
We will first upper bound $\Prob\{\epsilon_q \neq t_q(\vepsilon) | t(\vepsilon) = \vec e\}$, where $\vec e = \{e_i\}_{i=1}^{m}$ is (w.l.o.g.) a sequence of $n$ plus signs followed by a sequence of $n$ minus signs.
\begin{align}
\notag
\Prob\{\epsilon_q \neq t_q(\vepsilon) | t(\vepsilon) = \vec e\}
&=
\frac{\Prob\{\epsilon_q \neq t_q(\vepsilon) \cap t(\vepsilon) = \vec e\}}{ \Prob\{ t(\vepsilon) = \vec e\}}\\
%\notag
%&=
%{m \choose n} \Prob\{\epsilon_q \neq t_q(\vepsilon) \cap t(\vepsilon) = \vec e\}\\
\label{eq:coupling-1}
&=
{m \choose n}2^{-m} \sum_{\vec s} \Prob\{\epsilon_q \neq t_q(\vepsilon) \cap t(\vepsilon) = \vec e | \vepsilon = \vec s\},
\end{align}
where we have used Lemma \ref{lemma:coupling} and the sum is over all different sequences of $m$ signs $\vec s = \{s_i\}_{i=1}^{m}$.
For any $\vec s$ denote $S(\vec s) = \sum_{j=1}^n s_j$ and consider terms in \eqref{eq:coupling-1} corresponding to $\vec s$ with $S(\vec s) = 0$, $S(\vec s) > 0$, and $S(\vec s) < 0$:

{\bf Case 1:} $S(\vec s) = 0$. These terms will be zero, since $t(\vec s) = \vec s$.

{\bf Case 2:} $S(\vec s) > 0$.
This means that $\vec s$ ``has more plus signs than it should'' and according to Remark~\ref{remark:coupling} the mapping $t(\cdot)$ will replace several of ``+1" with ``-1".
In particular, if $s_q = -1$ then $t_q(\vec s) = s_q$ and thus the corresponding terms will be zero. 
If $s_q = 1$ and in the same time $e_q = 1$ the event $\{\epsilon_q \neq t_q(\vepsilon) \cap t(\vepsilon) = \vec e\}$ also can not hold.
Moreover, note that identity $\vec e = t(\vec s)$ can hold only if $\vec e \in T(\vec s)$, which necessarily leads to 
\begin{equation}
\label{eq:condition-couple}
\bigl\{j\in\{1,\dots,m\}\colon s_j = -1\bigr\}\subseteq \bigl\{j\in\{1,\dots,m\} \colon e_j = -1\bigr\}.
\end{equation}
From this we conclude that if $q\in\{1,\dots,n\}$ then all the terms corresponding to $\vec s$ with $S(\vec s)>0$ are zero.
We will use $U_q(\vec e)$ to denote the subset of $B_m$ consisting of sequences $\vec s$, such that (a) $S(\vec s) >0$, (b) $s_q = 1$, and (c) condition \eqref{eq:condition-couple} holds.
It can be seen that if $\vec s \in U_q(\vec e)$ then:
\[
\Prob\{\epsilon_q \neq t_q(\vepsilon) \cap t(\vepsilon) = \vec e | \vepsilon = \vec s\} = {n + S(\vec s)/2 \choose S(\vec s)/2}^{-1}.
\]
This holds since, according to Remark~\ref{remark:coupling}, $t(\vepsilon)$ can take exactly ${n + S(\vec s)/2 \choose S(\vec s)/2}$ different values, while only one of them is equal to $\vec e$.

Let us compute the cardinality of $U_q(\vec e)$ for $q\in\{n+1,\dots,m\}$.
It is easy to check that condition $S(\vec s) = 2j$ for some positive integer $j$ implies that $\vec s$ has exactly $n - j$ minus signs.
Considering the fact that $s_q = 1$ for $\vec s \in U_q(\vec e)$ we have:
\[
\card\bigl(U_q(\vec e)\bigr) = {n - 1 \choose n - j}.
\]
Combining everything together we have:
\[
\sum_{\vec s \colon S(\vec s)>0} 
\Prob\{\epsilon_q \neq t_q(\vepsilon) \cap t(\vepsilon) = \vec e | \vepsilon = \vec s\}
=
\mathbbm{1}\{q > n\}\,
\sum_{j=1}^n
\frac{{n - 1 \choose n - j}}{{n + j \choose j}}.
\]
Finally, it is easy to show using induction that:
\[
\sum_{j=1}^n
\frac{{n - 1 \choose n - j}}{{n + j \choose j}}
=
\frac{1}{2}.
\]
{\bf Case 3:} $S(\vec s) < 0$.
We can repeat all the steps of the previous case and get:
\[
\sum_{\vec s \colon S(\vec s)<0} \Prob\{\epsilon_q \neq t_q(\vepsilon) \cap t(\vepsilon) = \vec e | \vepsilon = \vec s\}
=
\frac12\mathbbm{1}\{q \leq n\}.
\]

Accounting for these three cases in \eqref{eq:coupling-1} we conclude that
\[
\Prob\{\epsilon_q \neq t_q(\vepsilon) | t(\vepsilon) = \vec e\}
=
\frac{1}{2}{m \choose n}2^{-m} \leq \frac{1}{\sqrt{2\pi m}},
\]
where we have used the upper bound on the binomial coefficient from \cite[Corollary 2.4]{S01}.
We can conclude the proof of lemma by writing:
\begin{align*}
\E[\epsilon_q | t(\vepsilon)]
=
t_q(\vepsilon) \left( 1 - 2\Prob\{\epsilon_q \neq t_q(\vepsilon) | t(\vepsilon)\}\right)
\geq
t_q(\vepsilon) \left( 1 - 2(2\pi m)^{-1/2}\right).
\end{align*}
\end{proof}

\begin{proof}[of Theorem \ref{thm:dagstuhl-relation}]
First we prove \eqref{eq:dagstuhl-1}.
Let $\Z_m = \{z_1,\dots,z_m\}$.
We can write:
\begin{align}
\label{eq:proof-gilles-1}
\PRC_{m,n}(F)
&=
\E\left[
\sup_{f\in F} \frac{2}{m} \sum_{i=1}^m t_i(\vec \epsilon) f(z_i)
\right]\\
\label{eq:proof-gilles-2}
&\leq
\bigl(1 - 2(2\pi m)^{-1/2}\bigr)^{-1}
\E\left[
\sup_{f\in F} \frac{2}{m} \sum_{i=1}^m \E[\epsilon_i | t(\vec \epsilon)] f(z_i)
\right]\\
\label{eq:proof-gilles-3}
&\leq
\left(1 + \frac{2}{\sqrt{2\pi m} - 2}\right)
\E\left[
\sup_{f\in F} \frac{2}{m} \sum_{i=1}^m \epsilon_i f(z_i)
\right],
\end{align}
where we have used coupling Lemma \ref{lemma:coupling} in \eqref{eq:proof-gilles-1}, Lemma \ref{lemma:coupling-expect} in \eqref{eq:proof-gilles-2}, and Jensen's inequality in \eqref{eq:proof-gilles-3}.
This completes the proof of \eqref{eq:dagstuhl-1}.

Next we prove \eqref{eq:dagstuhl-2}. 
We have:
\[
\left|
\PRC_{m,n}(F)
-
\Radc_m(F)
\right|
=
\left|
\E_{\vec \eta}\left[ \sup_{f\in F} \frac{2}{m} \sum_{i=1}^m \eta_i f(z_i)\right]
-
\E_{\vec \epsilon}\left[ \sup_{f\in F} \frac{2}{m} \sum_{i=1}^m \epsilon_i f(z_i)\right]
\right|.
\]
Using Lemma \ref{lemma:coupling} and Jensen's inequality we further get:
\begin{align}
\notag
&\left|
\PRC_{m,n}(F)
-
\Radc_m(F)
\right|\\
\notag
&=
\left|
\E_{\vec \epsilon}\left[ \E_{t}\left[\sup_{f\in F} \frac{2}{m} \sum_{i=1}^m t_i(\vec \epsilon) f(z_i)\bigg| \vec \epsilon \right]\right]
-
\E_{\vec \epsilon}\left[ \sup_{f\in F} \frac{2}{m} \sum_{i=1}^m \epsilon_i f(z_i)\right]
\right|\\
\label{eq:proof-k-1}
&\leq
\E_{\vec \epsilon}\left[
\E_{t}\left[
\left|\sup_{f\in F} \frac{2}{m} \sum_{i=1}^m t_i(\vec \epsilon) f(z_i)
-
\sup_{f\in F} \frac{2}{m} \sum_{i=1}^m \epsilon_i f(z_i)\right|\, \bigg| \vec \epsilon \right] \right],
\end{align}
where we have, perhaps misleadingly, denoted the conditional expectation with respect to~the uniform choice from $T(\vec \epsilon)$ given $\vec \epsilon$ using $\E_{t}[\,\cdot\,| \vec \epsilon]$.
Next we have:
\begin{equation}
\label{eq:proof-abs}
\left|\sup_{f\in F} \frac{2}{m} \sum_{i=1}^m t_i(\vec \epsilon) f(z_i)
-
\sup_{f\in F} \frac{2}{m} \sum_{i=1}^m \epsilon_i f(z_i)\right|
\leq
\left|\sup_{f\in F} \frac{4}{m} \sum_{i\in S(\vec \epsilon, t)} \epsilon_i f(z_i)\right|,
\end{equation}
where $S(\vec \epsilon, t) \subseteq \{1,\dots, m\}$ is a subset of indices, s.t.\:$\bigl(t(\vec \epsilon)\bigr)_i \neq \epsilon_i$ iff $i\in S(\vec \epsilon, t)$.
We can continue by writing
\begin{equation}
\label{eq:proof-k-2}
\left|\sup_{f\in F} \frac{2}{m} \sum_{i=1}^m t_i(\vec \epsilon) f(z_i)
-
\sup_{f\in F} \frac{2}{m} \sum_{i=1}^m \epsilon_i f(z_i)\right|
\leq
\frac{4}{m} \sup_{f\in F} \sum_{i \in S(\vec \epsilon, t)} |f(z_i)|.
\end{equation}
Note that since functions in $F$ are absolutely bounded by $B$:
\[
\sup_{f\in F} \sum_{i \in S(\vec \epsilon, t)} |f(z_i)| \leq B\cdot \card\left(S(\vec \epsilon, t)\right).
\]
Returning to \eqref{eq:proof-k-1} and using Remark \ref{remark:coupling} we obtain:
\[
\left|
\PRC_{m,n}(F)
-
2\Radc_m(F)
\right|
\leq
\frac{4B}{m}  \E_{\vec \epsilon}\left[ \E_t\left[\card\left(S(\vec \epsilon, t)\right) | \vec \epsilon\right] \right]
=
\E_{\vec \epsilon}\left[\frac{1}{2}\left| \sum_{i=1}^m \epsilon_i \right|\,\right].
\]
Khinchin's inequality \cite[Lemma 4.1]{LT91} together with the best known constant due to \cite{H81} gives
$
\E_{\vec \epsilon}\left[\left| \sum_{i=1}^m \epsilon_i \right|\,\right] \leq \sqrt{m},
$
which completes the proof of \eqref{eq:dagstuhl-2}.
\end{proof}

\subsection{Proof of Lemma \ref{eq:counterex-1}}
\label{sect:proof-lemma-counterex}
\begin{proof}
Let $\Z_m=\{z_1,\dots,z_m\}$.
Take $F_m'$ to be a set of two constant functions, $f_1(z)=1$ and $f_2(z)=0$ for all $z\in \Z$.
Clearly, $\PRC_{m,n}(F_m') = 0$.
In the same time:
\[
\E_{\vec \epsilon}\left[ \sup_{f\in F_m'} \frac{2}{m} \sum_{i=1}^m \epsilon_i f(z_i)\right]
=
\E_{\vec \epsilon}\left[\max\left\{0, \frac{2}{m} \sum_{i=1}^m \epsilon_i \right\}\right]
\leq
\E_{\vec \epsilon}\left[\left| \frac{2}{m} \sum_{i=1}^m \epsilon_i \right|\,\right]
\leq
\frac{2}{\sqrt{m}},
\]
where we used Khinchin's inequality.
Finally, Khinchin's inequality also gives:
\[
\E_{\vec \epsilon}\left[\max\left\{0, \frac{2}{m} \sum_{i=1}^m \epsilon_i \right\}\right]
=
\frac{1}{2}\E_{\vec \epsilon}\left[\left| \frac{2}{m} \sum_{i=1}^m \epsilon_i \right|\,\right]
\geq
\frac{1}{\sqrt{2m}}.
\]
Next, let $F_m''$ contain ${m \choose m/2}$ functions, such that their projections on $\Z_m$ recover all the permutations of binary vector containing equal number of $0$ and $1$.
Clearly, in this case $\PRC_{m,n}(F_m'') = 1$.
Straightforward calculations show that in the same time $\Radc_m(F_m'') = 1 -2^{-m}{m \choose n}$ and we conclude the proof using upper and lower bounds on the binomial coefficient from \cite[Corollary 2.4]{S01}.
\end{proof}

\subsection{Proof of Theorem \ref{thm:risk-bound}}
\label{sect:proof-risk-bound}
The following version of McDiarmid's bounded difference inequality for the setting of sampling without replacement was presented in \cite[Lemma 2]{EP09} and further improved in \cite[Theorem 5]{CMP+09}:
\begin{theorem}[\cite{EP09}, \cite{CMP+09}]
\label{thm:McDiarmid}
Let $\Z_m$ be sampled uniformly without replacement from a fixed set $\Z_{m+u}\subseteq \Z$ of $m+u$ elements.
Let $g\colon \Z^m\to \R$ be a symmetric function s.t. for all $i=1,\dots,m$ and for all $z_1,\dots,z_m\in\Z$ and $z_1',\dots,z_m'\in\Z$,
\begin{equation}
\label{eq:bounded-difference}
\Bigl|
g(z_1,\dots,z_m) 
-
g(z_1,\dots,z_{i-1},z_i',z_{i+1},\dots,z_m)  
\Bigr|
\leq c.
\end{equation}
Then if $m=u$ with probability not less than $1-\delta$ the following holds:
\[
g \leq \E[g] + \sqrt{\frac{c^2N^3\log(1/\delta)}{8(N-1/2)^2}}.
\]
\end{theorem}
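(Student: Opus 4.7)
The plan is to reduce the statement to a concentration bound for a symmetric function of a uniform random permutation, and then to apply a variance-sensitive Azuma-type inequality that exploits the shrinking support of each conditional draw.

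First, I would couple the without-replacement sample to a uniform random permutation $\pi$ of $\{1,\dots,N\}$ acting on a fixed enumeration $W_1,\dots,W_N$ of $\Z_{m+u}$, so that $Z_i = W_{\pi(i)}$ for $i\le m$. Set $G(\pi) := g(W_{\pi(1)},\dots,W_{\pi(m)})$; by symmetry of $g$ this depends on $\pi$ only through the unordered set $\{\pi(1),\dots,\pi(m)\}$. The bounded differences hypothesis \eqref{eq:bounded-difference} then becomes a transposition-Lipschitz property on $G$: if $\pi'$ and $\pi$ differ by swapping two indices, then $|G(\pi)-G(\pi')|\le c$ when exactly one of the two positions lies in $\{1,\dots,m\}$, and $G(\pi)=G(\pi')$ otherwise.

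Next, I would introduce the Doob martingale $M_t := \E[G(\pi)\mid \pi(1),\dots,\pi(t)]$ for $t=0,\dots,N$, noting that $M_0 = \E G$ and $M_t = G(\pi)$ whenever $t\ge m$. For $t\le m$ and any two admissible candidates $a,a'$ for $\pi(t)$ given the history, couple the remaining coordinates of $\pi$ so that the two completions differ only by the transposition interchanging $a$ and $a'$ at positions $t$ and wherever the other value lands; by the transposition-Lipschitz property this coupling yields $|M_t - M_{t-1}|\le c$. Moreover, conditionally on the history, $M_t$ is a function of a uniform draw from the $N-t+1$ remaining indices, so its conditional variance shrinks with $t$.

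Feeding only the range bound $|M_t-M_{t-1}|\le c$ into classical Azuma--Hoeffding would produce the tail $c\sqrt{m\log(1/\delta)/2}$, which for $m=N/2$ is a factor $\sqrt{2}$ too large. To reach the stated constant $\sqrt{c^2 N^3\log(1/\delta)/(8(N-1/2)^2)}$, I would bound the conditional moment generating function of each increment using a Hoeffding-type lemma for a uniform draw on the shrinking residual set, so that the effective per-step variance at step $t$ carries a Serfling-type factor $(N-t)/(N-t+1)$; summing these factors for $t=1,\dots,m=N/2$ and substituting into the exponential Chernoff bound produces exactly the denominator $(N-1/2)^2$ after elementary algebra. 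The main obstacle is precisely this sharp bookkeeping: a crude range argument loses the $\sqrt{2}$ factor, and recovering it requires tracking the Serfling-style variance reduction through every step of the telescoping martingale sum rather than using a uniform bound on the increments.
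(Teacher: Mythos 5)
First, a point of reference: the paper does not prove Theorem~\ref{thm:McDiarmid} at all --- it is imported from \cite{EP09} (Lemma~2) and \cite{CMP+09} (Theorem~5) --- so your attempt can only be measured against the proofs in those references. Your overall architecture matches theirs: represent the sample as the first $m$ coordinates of a uniform permutation, note that symmetry together with \eqref{eq:bounded-difference} gives a transposition-Lipschitz property, build the Doob martingale $M_t=\E[G\mid\pi(1),\dots,\pi(t)]$, and run a Hoeffding--Azuma argument with per-step bounds sharper than the crude range $c$. That skeleton is correct, and you correctly diagnose that plain Azuma only yields $c\sqrt{m\log(1/\delta)/2}$, a factor $\sqrt2$ short of the target.

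However, the specific mechanism you propose for recovering that $\sqrt2$ does not work. A per-step variance correction of $(N-t)/(N-t+1)$ is essentially vacuous: $\sum_{t=1}^{N/2}\frac{N-t}{N-t+1}=\frac N2-\sum_{j=N/2+1}^{N}\frac1j\approx\frac N2-\ln 2$, so the cumulative variance proxy is reduced only by an additive $O(1)$ term, not by the multiplicative factor $2$ needed to turn $c^2N/4$ into $c^2N/8$ inside the exponent. The actual source of the gain in \cite{EP09} and \cite{CMP+09} is a sharper bound on the \emph{range} of each increment: conditionally on the history, two admissible values $a,a'$ for $\pi(t)$ are coupled by the transposition exchanging them, and $G$ then changes only when the displaced element lands in one of the $u$ test positions among the $N-t$ still unassigned ones, so that $|M_t-M_{t-1}|\le c\,u/(N-t)$. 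For $m=u=N/2$ this bound ranges from roughly $c/2$ at $t=1$ up to $c$ at $t=m$ (so, contrary to your intuition, the effective increments \emph{grow} with $t$; the savings come from the early steps), and $\sum_{t=1}^{m}\bigl(cu/(N-t)\bigr)^2\le \frac{c^2N^3}{4(N-1)(N-1/2)}\approx\frac{c^2N}{4}$, which fed into Azuma--Hoeffding yields the stated $\sqrt{c^2N^3\log(1/\delta)/\bigl(8(N-1/2)^2\bigr)}$ up to the exact bookkeeping of the denominator. Unless you replace your $(N-t)/(N-t+1)$ factor by this $u/(N-t)$ range bound, your argument proves only the weaker $c\sqrt{m\log(1/\delta)/2}$ tail.
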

Note that function $\sup_{h\in \Hyp} \left( \err_h(\Z_u) - \err_h(\Z_m) \right)$ maps $(\X\times\Y)^m$ to $\R$ and is of course symmetric.
Straightforward calculations show that this function satisfies bounded difference condition \eqref{eq:bounded-difference} with $c=\frac{1}{m} + \frac{1}{u}$ (\cite[Inequality 9]{EP09}).
Theorem~\ref{thm:McDiarmid} states that with probability not less than $1-\delta$:
\begin{equation}
\label{eq:concentr-1}
\sup_{h\in \Hyp} \left( \err_u(h) - \err_m(h) \right)
\leq
\E_{\S_m}\left[\sup_{h\in \Hyp} \left( \err_u(h) - \err_m(h) \right)\right]
+
\sqrt{\frac{2N\log(1/\delta)}{(N-1/2)^2}}.
\end{equation}
Using upper bound of Theorem \ref{thm:tsym} with $L_{\Hyp}$ in place of  $F$ we complete the proof of~\eqref{eq:new-risk-bound}.
Next, consider a symmetric function $-\PRC_{m,n}(L_{\Hyp},\Z_m)$ which also maps $(\X\times\Y)^m$ to $\R$.
It can be shown again that it satisfies bounded difference condition \eqref{eq:bounded-difference} with $c=\frac{2}{m}$.
And thus, Theorem \ref{thm:McDiarmid} gives that with probability not less than $1-\delta$:
\begin{equation}
\label{eq:concentr-2}
\E_{\S_m}\left[\PRC_{m,n}(L_{\Hyp},\Z_m)\right]
\leq
\PRC_{m,n}(L_{\Hyp},\Z_m)
+
\sqrt{\frac{2N\log(1/\delta)}{(N-1/2)^2}}.
\end{equation}
Using this inequality together with \eqref{eq:new-risk-bound} in a union bound we obtain the second inequality of the theorem.

%Finally, note that if $h_m$ minimizes the training (empirical) risk and $h^*$ minimizes the test risk, then the standard steps give us:
%\begin{align*}
%&\err_u(h_m)
%-
%\inf_{h\in\Hyp} \err_u(h)\\
%&=
%\err_u(h_m)
%-
%\err_m(h_m)
%+
%\bigl(\err_m(h_m)
%-
%\err_m(h^*)\bigr)
%+
%\err_m(h^*)
%-
%\err_u(h^*)\\
%&\leq
%2\sup_{h\in\Hyp}
%\bigl| \err_u(h)
%-
%\err_m(h)\bigr|.
%\end{align*}
%We conclude the proof of the third inequality using \eqref{eq:concentr-1}, \eqref{eq:concentr-2}, and Theorem \ref{thm:tsym}.
\section*{Appendix: Improving Lemma 3 of \cite{BM02}}
Let $\mu$ be a probability distribution on $\Z$ and $\X_m:=\{X_1,\dots,X_m\}$ be i.i.d. samples selected according to $\mu$.
Maximal discrepancy of $F$ was defined in \cite{BM02} as:
\[
\hat{D}_m(F, \X_m) = \sup_{f\in F} \left(
\frac{2}{m}\sum_{i=1}^{m/2} f(X_i)
-
\frac{2}{m}\sum_{i=m/2+1}^{m} f(X_i)
\right).
\]

It was shown in \cite{BM02} that if functions in $F$ are uniformly bounded by $1$ then:
\begin{equation}
\label{eq:bm02}
\frac{1}{2}\E\left[\Radc_m(F,\X_m)\right] - 2\sqrt{\frac{2}{m}}
\leq
\E\left[\hat{D}_m(F,\X_m)\right]
\leq
\E\left[\Radc_m(F,\X_m)\right] + 4\sqrt{\frac{2}{m}}.
\end{equation}
Since elements in $\X_m$ are i.i.d. the distribution of $\hat{D}_m$ is invariant under their permutations and thus 
$
\E\left[\hat{D}_m(F,\X_m)\right]
=
\E\left[\PRC_{m,m/2}(F,\X_m)\right].
$
Now we can use Theorem \ref{thm:dagstuhl-relation} to significantly improve bounds in \eqref{eq:bm02}:
\[
\E\left[\Radc_m(F,\X_m)\right] - \frac{2}{\sqrt{m}}
\leq
\E\left[\hat{D}_m(F,\X_m)\right]
\leq
\left(1 + \frac{2}{\sqrt{2\pi m} - 2}\right)\E\left[\Radc_m(F,\X_m)\right].
\]

\subsection*{Acknowledgments}
The authors are thankful to Marius Kloft and Ruth Urner for useful discussions and to the anonymous reviewers for their comments.
GB aknowledges support of the DFG through the FOR-1735 grant.
NZ was supported solely by the Russian Science Foundation grant (project 14-50-00150).

\end{document}